\documentclass{article}

\usepackage[preprint, nonatbib]{neurips_2026}  

\usepackage[utf8]{inputenc} 
\usepackage[T1]{fontenc}    
\usepackage{hyperref}       
\usepackage{url}            
\usepackage{booktabs}       
\usepackage{amsfonts}       
\usepackage{nicefrac}       
\usepackage{microtype}      
\usepackage{xcolor}         

\usepackage{graphicx}       
\usepackage{amsmath}        
\usepackage{amssymb}        
\usepackage{csquotes}       
\usepackage{pdfpages}       
\usepackage{float}
\usepackage{multirow}
\usepackage[table,dvipsnames]{xcolor}
\usepackage{fontawesome5}
\usepackage{todonotes}
\usepackage{mathtools}

\hypersetup{
    colorlinks=true,
    linkcolor=MidnightBlue,
    urlcolor=MidnightBlue,
}

\def\Pid{P_{\text{ID}}}
\def\Pood{P_{\text{OOD}}}
\def\Pnoise{P_{0}}
\def\X{X}
\def\Z{Z}

\usepackage{amsthm}
\newtheorem{theorem}{Theorem}[section]
\newtheorem{lemma}[theorem]{Lemma}
\newtheorem{corollary}[theorem]{Corollary}

\usepackage{cleveref}       
\crefname{figure}{Figure}{Figures}
\crefname{table}{Table}{Tables}
\crefname{section}{Section}{Sections}
\crefname{appendix}{Appendix}{Appendices}

\newcommand{\cibracket}[2]{\,{\scriptsize [#1, #2]}}

\makeatletter
\newcommand{\reviewhide}[1]{%
  \if@neuripsfinal
    #1
  \else
    \if@preprint
      #1
    \else
      [removed for review]
    \fi
  \fi
}
\makeatother

\usepackage[backend=biber, style=numeric, sorting=none]{biblatex}
\AtEveryBibitem{%
  \clearfield{note}%
  \clearlist{language}
}
\renewbibmacro*{in:}{%
  \iffieldundef{journaltitle}{}{%
    \printtext{\bibstring{in}\intitlepunct}}%
  \iffieldundef{booktitle}{}{%
    \printtext{\bibstring{in}\intitlepunct}}%
}
\renewbibmacro*{url+urldate}{}

\title{The Signal in the Noise: OOD Detection Through Goodness-of-Fit Testing in Factorised Latent Spaces}

\author{%
  Philipp Bomatter \\
  School of Informatics \\
  University of Edinburgh \\
  \texttt{philipp.bomatter@ed.ac.uk} \\
  \And
  Jack Geary \\
  School of Informatics \\
  University of Edinburgh \\
  \texttt{jack.geary@ed.ac.uk} \\
  \And
  Henry Gouk \\
  School of Informatics \\
  University of Edinburgh \\
  \texttt{henry.gouk@ed.ac.uk} \\
}

\begin{document}

\maketitle

\begin{abstract}
    Deep generative models offer a natural foundation for out-of-distribution (OOD) detection, yet prior work has shown that their assigned likelihoods are notoriously unreliable indicators for in- vs out-of-distribution data. In this paper, we address this problem by leveraging the diffeomorphic and mass-preserving properties of continuous normalising flows. Our analysis shows that OOD samples are mapped to noise samples that are highly atypical under the noise prior in ways not captured by the likelihood. Based on this observation, we propose a new method---Signal in the Noise (SITN)---for OOD detection on the single-sample level. SITN requires no access to OOD data, incurs minimal computational overhead, and provides strict control of false positive rates. Comprehensive evaluations through standard benchmarks and synthetic perturbations highlight the method's effectiveness and the absence of the complexity bias inherent to likelihood-based methods.
\end{abstract}

\begin{center}
    \faGithub \quad \reviewhide{\href{https://github.com/bomatter/signal-in-the-noise-paper}{github.com/bomatter/signal-in-the-noise-paper}}
\end{center}

\section{Introduction}
\label{sec:introduction}

    The deployment of machine learning models in high-risk domains like healthcare is fundamentally limited by their reliability. Models are prone to failing catastrophically---and with high confidence---when confronted with out-of-distribution (OOD) data that deviates substantially from the training distribution \cite{nguyen_deep_2014}. To mitigate this risk, OOD detection can enable systems to abstain from decision-making or defer to a human expert when a model's prediction is deemed unreliable.

    Deep generative models provide a natural foundation for OOD detection by explicitly modelling the data distribution. In particular, Continuous Normalising Flows (CNFs) \cite{chen2018neural}, often trained via flow matching methods \cite{lipman_flow_2023}, are a logical choice for this task because they enable the tractable computation of exact likelihoods. However, prior work has shown that these likelihoods are unreliable indicators for in- vs out-of-distribution data in high dimensions \cite{nalisnick_deep_2019}. For instance, widely replicated experiments have demonstrated that generative models trained on CIFAR-10 consistently assign higher likelihoods to samples from MNIST or SVHN \cite{nalisnick_deep_2019, serra_input_2020, schirrmeister_understanding_2020, kirichenko_why_2020}.

    Previous work has investigated the theoretical underpinnings of this limitation of likelihood-based OOD detection and proposed various corrections to mitigate it. Notable approaches include typicality tests (essentially a two-sided thresholding of the likelihood) \cite{nalisnick_detecting_2019}, complexity penalties \cite{serra_input_2020}, likelihood ratios \cite{schirrmeister_understanding_2020, ren_likelihood_2019, xiao_likelihood_2020}, and fine-grained analyses of likelihood components \cite{nalisnick_deep_2019, morningstar_density_2021}. While these methods succeed in partially alleviating the pronounced failure mode observed between datasets like CIFAR-10 and SVHN, our analysis of representative approaches (Typicality \cite{nalisnick_detecting_2019}, DoSE \cite{morningstar_density_2021}) reveals that they still exhibit undesirable biases linked to inherent limitations of the likelihood. Furthermore, these approaches frequently impose practical constraints, as they may require domain-specific compression algorithms, depend on (typically unavailable) out-of-distribution data to construct background models, or incur significant computational overhead.

    We depart from these likelihood-centric approaches and propose a new framework. Leveraging the mass-conserving, diffeomorphic mappings of CNFs, we first show that OOD samples map to noise latents with atypical empirical properties under the noise prior. Notably, this holds independently of the divergence of the vector field, which does affect likelihoods. We then quantify this atypicality at the single-sample level by exploiting the completely factorised nature of the Gaussian noise prior. Specifically, this factorisation implies that the elements of a single high-dimensional latent vector must behave as i.i.d. samples from a univariate standard normal distribution. This allows us to test an individual sample based on two defining properties: the marginal normality of its elements and their dimensional independence. We present a new method termed Signal in the Noise (SITN) as a specific instantiation of this framework, and validate it through comprehensive evaluations on standard benchmarks and synthetic perturbations.
    
    In summary, our main contributions are as follows:
    \begin{itemize}
        \item We introduce a new, theoretically grounded framework that casts OOD detection as single-sample goodness-of-fit testing against the factorised Gaussian noise prior of Continuous Normalising Flows.

        \item We propose a specific method (SITN) for OOD detection. SITN incurs minimal computational overhead, requires no access to OOD data, and provides strict Type I error control. We demonstrate the method's effectiveness through standard benchmarks and detailed ablations.

        \item We present visualisations and analyses that shed light on the differential behaviour of SITN and various baseline methods and highlight promising directions for future work.
    \end{itemize}

\section{Related Work}
\label{sec:related_work}
    
    \paragraph{Limitations of the Likelihood for OOD Detection}
        Several works have identified that the likelihood assigned by deep generative models is not a reliable indicator for OOD detection \cite{nalisnick_deep_2019, serra_input_2020, schirrmeister_understanding_2020, kirichenko_why_2020, havtorn_hierarchical_2022, kamkari_geometric_2024}. \citeauthor*{nalisnick_detecting_2019} highlighted an inherent limitation of the likelihood: regions of high probability \emph{density} can lie outside the typical set of the distribution that contains almost all probability \emph{mass} \cite{nalisnick_detecting_2019}. This is well illustrated by the concentration of measure behaviour of a high-dimensional ($d \gg 1$) standard Gaussian distribution, where the maximum density is attained at the mode, but virtually all probability mass is concentrated in the annulus with radius $\sqrt{d}$ \cite{blum_foundations_2020}. \citeauthor*{zhang2021understanding} challenged the typical set hypothesis as an explanation for the observed failures, attributing them instead to model estimation error \cite{zhang2021understanding}. Other works have argued that likelihoods obtained through generative models are ineffective for OOD detection because they are dominated by low-level features that are similar across datasets, rather than semantic content \cite{schirrmeister_understanding_2020, kirichenko_why_2020, havtorn_hierarchical_2022}.

    \paragraph{Likelihood Corrections}
        To address these limitations, several works have attempted to correct or augment the raw likelihoods.
        Based on their concentration of measure observations, 
        \citeauthor*{nalisnick_detecting_2019} introduced typicality tests, arguing that in-distribution (ID) samples should not simply lie in a high density region, but rather reside in the typical set \cite{nalisnick_detecting_2019}.
        Extending this geometric intuition, \citeauthor*{kamkari_geometric_2024} proposed pairing likelihoods with estimates of Local Intrinsic Dimension (LID) to flag OOD samples that reside on low-dimensional manifolds---regions that contain negligible probability mass despite potentially high likelihoods \cite{kamkari_geometric_2024}. However, estimating LID for normalising flows relies on explicitly constructing the full Jacobian matrix. While this is tractable for heavily constrained architectures (e.g., Glow), it is computationally prohibitive for modern, highly expressive flow matching models.
        \citeauthor*{serra_input_2020} observed a strong correlation between likelihood and sample complexity (e.g., PNG compression size) and proposed a complexity correction that can be interpreted as a likelihood ratio test \cite{serra_input_2020}. This requires a specific complexity estimate, which can be arbitrary and severely limiting in non-standard data domains where suitable compression algorithms are unavailable.
        Other works have also proposed likelihood ratio tests \cite{ren_likelihood_2019, schirrmeister_understanding_2020}. A downside of these approaches is the requirement of a separate background model trained on OOD data, which is not always straightforward to construct and adds computational overhead.
        A special case that bypasses the need for OOD data is the Likelihood Regret method proposed by \citeauthor*{xiao_likelihood_2020} \cite{xiao_likelihood_2020}. Their method fine-tunes the generative model on the individual test sample (through additional optimisation steps during inference) and then computes the log ratio between the likelihoods obtained from the fine-tuned and the baseline model. As a downside, this approach complicates the inference process and still incurs computational overhead.
        Alternatively, \citeauthor*{choi_waic_2019} proposed using the Watanabe-Akaike Information Criterion (WAIC) to penalize epistemic uncertainty by calculating the variance of likelihood evaluations across an ensemble of generative models, albeit at the cost of training multiple networks \cite{choi_waic_2019}.

    \paragraph{Beyond Likelihoods}
        More recently, methods have moved beyond purely data-space likelihoods to probe the latent representations and generative mappings of the generative models.
        Most similar in spirit to our method is the work by \citeauthor*{jiang_revisiting_2021} \cite{jiang_revisiting_2021}, which also explored goodness-of-fit testing in the latent space. Their work is however focused on group-OOD detection and not directly applicable to single samples as our method.
        Density of States Estimation (DoSE) measures the empirical density of multiple statistics, including the log-probability of the latent representation and the log-determinant of the Jacobian of the mapping \cite{morningstar_density_2021}.
        \citeauthor*{heng_out--distribution_2024} proposed DiffPath, which measures the rate-of-change and curvature of diffusion paths \cite{heng_out--distribution_2024}. Interestingly, their approach relies on the ability of diffusion models to map even OOD samples approximately to the noise prior. Our method leverages the exact opposite: using deterministic and diffeomorphic CNFs instead of stochastic diffusion models, we exploit that OOD and ID samples are not mapped to the same regions in the noise space.
    
\section{Methods}
\label{sec:methods}

    \subsection{Out-of-distribution Detection in Noise Space}
    
        We formalise the problem of OOD detection as classifying a novel observation, $\X$, as being drawn from one of two possible distributions: $\Pid$ (In Distribution) or $\Pood$ (Out Of Distribution). During training time, it is assumed one only has access to data drawn from $\Pid$.
        
        We begin by considering a CNF trained via, e.g., Flow Matching \cite{lipman_flow_2023}. A perfectly trained CNF defines a vector field whose corresponding Ordinary Differential Equation (ODE) admits a flow  $\phi : \mathbb{R}^D \to \mathbb{R}^D$ that maps samples from a prior $\Pnoise$ to samples from $\Pid$; i.e., the push-forward measure of $\Pnoise$ by $\phi$ is $\Pid$. $\Pnoise$ is usually taken to be $\mathcal{N}(0, I)$---we will assume this choice going forward. The reversibility of ODEs means one also has access to $\phi^{-1}$, and is able to map real data points back to samples from $\Pnoise$. Consequently, mapping ID data points back to the noise space will result in samples from $\mathcal{N}(0, I)$. Conversely, OOD samples from $\Pood$ will yield samples that are not distributed according to the standard multivariate Gaussian. That is, for $\Z = \phi^{-1}(\X)$ we have that $\Z \sim \Pnoise$ if and only if $\X \sim \Pid$. We quantify the extent to which $\Z$ conforms with the prior through test statistics, reducing OOD detection to a statistical goodness-of-fit test in the noise space.
    
        \begin{figure}[t]
            \centering
            \includegraphics[width=\textwidth]{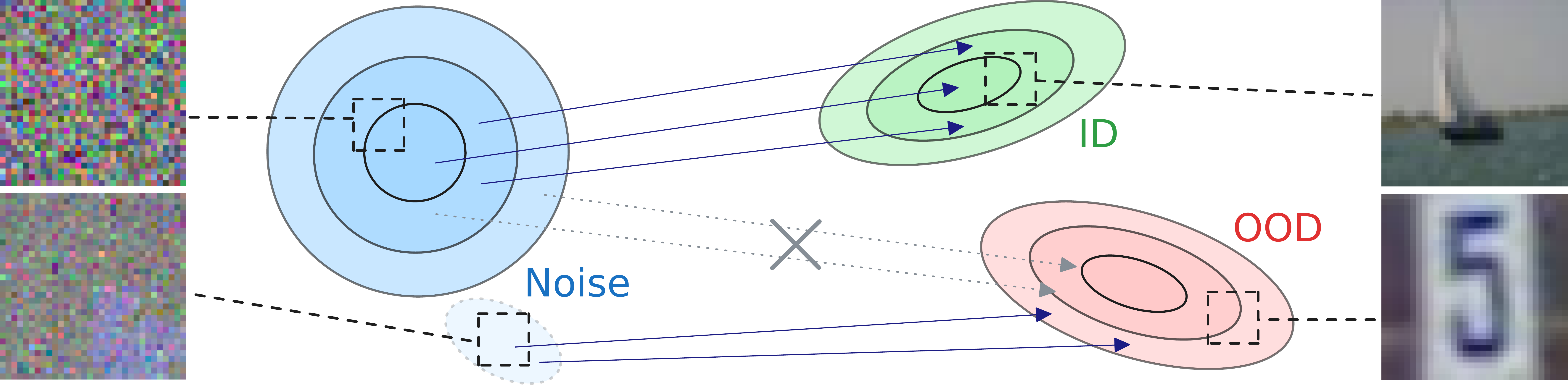}
            \caption{
                Illustration of OOD detection in the noise space. CNF models are trained to map noise samples from a standard normal distribution to the target distribution (ID). Since the resulting flow is a probability mass-conserving diffeomorphism, OOD samples with low probability mass under the target distribution will not correspond to typical Gaussian noise samples in noise space (crossed dotted lines), such that OOD detection can be cast as a goodness of fit test in the noise space.
            }
            \label{fig:concept}
        \end{figure}

    \subsection{Goodness-of-Fit Testing Against the Prior}

        To determine the goodness-of-fit of some $\Z$ against the theoretical standard multivariate Gaussian, we probe its two defining properties: marginal normality (each element $\Z_i$ follows a standard univariate normal distribution) and dimensional independence (elements $\Z_i$ and $\Z_j$ are mutually independent for all $i \neq j$). Formally, we treat this as a goodness-of-fit test with a composite null hypothesis, $H_0$, and decompose it into a conjunction of two constituent hypotheses $H_{\text{MN}}$ and $H_{\text{DI}}$, for marginal normality and dimensional independence, respectively: $H_0 \Rightarrow H_{\text{MN}} \land H_{\text{DI}}$. Under this decomposition, each constituent hypothesis poses a necessary condition for $H_0$. That is, if we reject one of the constituent hypotheses, then we should also reject the null hypothesis.
        
        Crucially, we are able to test these hypotheses entirely at the single-sample level by exploiting the fact that the multivariate standard Gaussian prior factorises completely. Specifically, a single observation, $\Z \sim \mathcal{N}(0, I)$, can be equivalently viewed as a collection of $D$ i.i.d. samples from a univariate standard Gaussian, $\mathcal{N}(0, 1)$. Because the dimensionality $D$ of many standard data modalities (such as images) is large, this perspective allows us to compute robust empirical statistics from an individual high-dimensional sample, rather than requiring a batch of multiple samples.

        In the following, we detail the specific statistics and combination mechanism that form our proposed SITN method as an instantiation of this framework.

        \paragraph{Marginal Normality}
            To evaluate whether the individual elements of $\Z$ conform to the expected univariate normal distribution, we employ the Anderson-Darling (AD) statistic. By treating the $D$ dimensions of $\Z$ as an empirical sample, the AD statistic quantifies the discrepancy between the empirical cumulative distribution function (CDF) of the sample and the theoretical CDF of the standard normal distribution, $\Phi$. Sorting the elements of $\Z$ in ascending order such that $\Z_{(1)} \leq \Z_{(2)} \leq \dots \leq \Z_{(D)}$, the statistic is computed as
            \begin{equation}
                S_{\text{AD}} = -D - \sum_{i=1}^{D} \frac{2i - 1}{D} \left[ \ln \Phi(\Z_{(i)}) + \ln(1 - \Phi(\Z_{(D+1-i)})) \right].
            \end{equation}
            A high $S_{\text{AD}}$ indicates that $\phi^{-1}$ mapped the data sample to a noise sample with an anomalous mean, incorrect variance, or distorted distributional shape violating the marginal normality expectation of the prior.

        \paragraph{Dimensional Independence}
            Acknowledging that no single statistical test can comprehensively detect all forms of dependence \cite{hutter_testing_2022}, we specifically test for autocorrelation. Independence between random variables implies there is no autocorrelation, so this still represents a necessary condition for the null hypothesis.
            We compute the coefficient of variation (CV) of the empirical power spectrum (PSD) of $\Z$, which is defined as the ratio of the standard deviation of the power spectrum to its arithmetic mean,
            \begin{equation}
                S_{\text{CV}} = \frac{\sigma_{\text{PSD}}}{\mu_{\text{PSD}}} = \frac{\sqrt{\frac{1}{D} \sum_{k=1}^{D} (\text{PSD}(k) - \mu_{\text{PSD}})^2}}{\mu_{\text{PSD}}},
            \end{equation}
            where $\mu_{\text{PSD}} = \frac{1}{D} \sum_{k=1}^{D} \text{PSD}(k)$. For standard Gaussian white noise, the empirical power spectrum follows an exponential distribution ($\sigma = \mu$), yielding a theoretical $S_{\text{CV}}$ of exactly $1$. Conversely, structural correlations between the components of $\Z$ manifest as dominant frequencies, inflating the spectral variance and driving $S_{\text{CV}}$ significantly above $1$. This statistic serves as a global test for autocorrelation: by the Wiener--Khinchin theorem, the power spectrum and autocorrelation function form a Fourier transform pair, meaning that an inflated $S_{\text{CV}}$ mathematically implies non-zero autocorrelation. We prefer this spectral approach over time-domain alternatives like the Ljung-Box test, as it captures global dependencies without the need to select a max-lag hyperparameter.

    \paragraph{Combining Statistics}
    \label{sec:statistics_combination}

        To produce a single scalar OOD score, we combine the normality and independence statistics using a max-quantile strategy. The empirical cumulative distribution functions (CDFs) for both statistics, denoted as $\hat{F}_{\text{AD}}$ and $\hat{F}_{\text{CV}}$, are estimated using an ID calibration set.
        The combined score is then defined as the maximum of the quantile-transformed statistics
        \begin{equation}
            S_{\text{SITN}} = \max \left( \hat{F}_{\text{AD}}(S_{\text{AD}}), \hat{F}_{\text{CV}}(S_{\text{CV}}) \right).
        \end{equation}
        The use of quantile transforms ensures that the statistics are mapped into a space where they have the same range and marginal distribution (i.e., uniform on the $\lbrack 0, 1 \rbrack$ interval), thus allowing them to be meaningfully compared. By taking the maximum, we identify a sample as OOD if it appears as an outlier in either its marginal distribution or its structural independence, achieving our goal of rejecting $H_0$ if any of the constituent hypotheses should be rejected.

    \subsection{Theoretical Properties}
    
        \paragraph{Provable Control of False Positive Rate}
            Determining a suitable threshold for OOD scores is difficult in practice. Ideally, one would have access to a set of known OOD and ID data, and use this to choose a threshold that trades off between the false positive and false negative rates of the OOD detector. While this is not possible without access to OOD data, ID data is sufficient to calibrate the detector, allowing one to select a tolerable false positive rate. In \cref{appendix:calibration}, we describe how $S_{\text{SITN}}$ can be calibrated and prove that this provides false positive rate (i.e., Type I error) control.
    
        \paragraph{Robustness to CNF Approximation Error}
            CNFs typically do not learn an exact model of the target $\Pid$, but rather a reasonably good approximation. Our framework is still applicable when the CNF is not a perfect representation of $\Pid$, but one must instead interpret the constituent hypotheses, $H_{\text{MN}}$ and $H_{\text{DI}}$, differently. They should be understood as statements about the maximum acceptable levels of deviation from normality and independence, respectively. One should expect poorer quality CNFs to result in increased false negative (i.e., Type II error) rates, but we note that the false positive rate is not impacted.
    
        \paragraph{Computational Complexity}
            For a noise sample $\Z$ of dimensionality $D$, the computational cost of $S_{\text{AD}}$ is dominated by the sorting of the elements, resulting in a complexity of $\mathcal{O}(D \log D)$. Similarly, the computation of $S_{\text{CV}}$ relies on obtaining the empirical power spectrum, which is dominated by the Fast Fourier Transform (FFT) and also scales as $\mathcal{O}(D \log D)$. The combined overhead of computing these statistics is negligible compared to the computational cost of the neural vector field evaluations required for the inverse flow.

    \section{Experimental Design}
    \label{sec:experimental_design}

        For each dataset described in \cref{sec:datasets}, we train a single unconditional CNF and use it to evaluate the OOD detection performance of the baseline methods described in \cref{sec:baselines} and SITN. For the cross-dataset experiments, we use the test split of the training dataset as ID data and the test split of a different dataset as OOD data. For the CIFAR-10 model, we furthermore evaluate OOD detection performance using the synthetic perturbations from the CIFAR-10-C benchmark \cite{hendrycks_benchmarking_2019}.

        Importantly, for comparisons across datasets with different original resolutions (e.g. CIFAR-10 and CelebA), we downsample the higher resolution dataset. This is to prevent introducing blurriness that leads to trivial OOD detection as reported by \citeauthor*{heng_out--distribution_2024} \cite{heng_out--distribution_2024}. In their experiments, the OOD detection AUROC on CIFAR-10 vs CelebA of their model dropped from 0.999 (when the lower resolution dataset was upsampled) to 0.502 (when the higher resolution dataset was downsampled).

        We use the UNet architecture following \citeauthor*{ho_denoising_2020} \cite{ho_denoising_2020}, as implemented in \citeauthor*{dhariwal_diffusion_2021} \cite{dhariwal_diffusion_2021}. Following the flow matching objective of \citeauthor*{lipman_flow_2023} \cite{lipman_flow_2023}, we construct a linear interpolant between a Gaussian noise sample $\Z \sim \mathcal{N}(0,I)$ and a data sample $\X \sim \Pid$: $\X_t = (1-t)\Z + t\X, \quad t \sim \mathcal{U}[0,1]$, and minimise the mean-squared error between the network output and the target vector field $\X - \Z$. Full architectural and training hyperparameters are detailed in Appendix~\ref{appendix:training_details}.
        
        The adaptive Dormand-Prince ODE solver (dopri5) \cite{dormand_family_1980} is used to compute $\phi$ and $\phi^{-1}$. OOD detection scores are then derived using the baseline methods and SITN.
    
        \subsection{Datasets}
        \label{sec:datasets}
        
        \paragraph{CIFAR-10}
            CIFAR-10 \cite{krizhevsky_learning_2009} consists of 60{,}000 $32 \times 32$ images
            across 10 object categories (50{,}000 train, 10{,}000 test). We reserve 10\% of the training set as validation split, yielding 45{,}000 training, 5{,}000 validation, and 10{,}000 test images.
        
        \paragraph{SVHN}
            The Street View House Numbers (SVHN) dataset \cite{netzer_reading_2011} contains $32 \times 32$ images obtained from Google Street View and is available for non-commercial use. We use the standard train (73{,}257 images) and test (26{,}032 images) splits, reserving 10\% of the training set for validation.
        
        \paragraph{CelebA}
            CelebA \cite{liu_deep_2015} comprises over 200{,}000 celebrity images with attribute annotations. It is available for non-commercial research purposes. We adopt the predefined train, validation, and test splits, downsampling all images to $32 \times 32$ pixels for consistency with the other datasets.

        \subsection{Baseline Methods}
        \label{sec:baselines}
            
            \paragraph{Log-likelihood}
                The log-likelihood is evaluated through the instantaneous change of variables formula as the sum of the log-probability of the latent representation and the integral of the vector field's divergence.

            \paragraph{Typicality}
                Typicality was proposed by \citeauthor*{nalisnick_detecting_2019} \cite{nalisnick_detecting_2019} and is defined as the absolute difference between the sample's negative log-likelihood and the resubstitution estimator of the entropy (i.e., the average negative log-likelihood over the training data; we experiment with using validation data instead in \cref{appendix:baselines_on_val}).
                We use the single-sample version as in \cite{morningstar_density_2021}.
            
            \paragraph{DoSE}
                Density of States Estimation (DoSE) \cite{morningstar_density_2021} measures the empirical density of multiple statistics using the (in-distribution) training data (we experiment with using validation data instead in \cref{appendix:baselines_on_val}). Specifically, we implement the $\text{DoSE}_{\text{KDE}}$ variant developed for flow-based models, which uses three statistics: the log-likelihood, the log-probability of the latent representation, and the log-determinant of the Jacobian. Because we operate in the continuous-time regime of CNFs, the  log-determinant term is replaced with the integral of the vector field's divergence.
                A one-dimensional Kernel Density Estimator (KDE) is fit to each statistic using the training data and the final OOD score is computed by summing the log-probabilities from the individual KDEs. Following the original publication, we use the default SciPy KDE implementation with a Gaussian kernel and automatic determination of the bandwidth parameter through Scott's rule. For computational efficiency, the KDEs are fit on a random subset of 10,000 samples; we empirically verified that this subsampling has minimal impact on the results. 

\section{Results}
\label{sec:results}

    \subsection{OOD Detection Performance: Cross-dataset Experiments}
    \label{sec:cross_dataset_results}

        \cref{tab:cross_dataset_results} reports the Area Under the Receiver Operating Characteristic (AUROC) curve for the cross-dataset experiments. Models were trained on one dataset and then evaluated for their OOD detection performance by using the test split of the training dataset as in-distribution and the test split of a different dataset as out-of-distribution samples. Further results with a comparison against complexity correction and WAIC can be found in \cref{appendix:additional_baselines}.

        \begin{table}[h]
          \caption{Cross-dataset OOD detection performance in terms of AUROC. Values in brackets denote bootstrapped 95\% CIs and results for the best method are highlighted in \textbf{bold}. SITN consistently outperforms baselines, demonstrating that noise samples contain useful information for OOD detection.}
          \label{tab:cross_dataset_results}
          \centering
          \begin{tabular}{llcccc}
            \toprule
            \textbf{Train (ID)} & \textbf{Test (OOD)} & \textbf{Log-likelihood} & \textbf{Typicality} & \textbf{DoSE} & \textbf{SITN (Ours)} \\
            \midrule
            CIFAR-10 & SVHN     & 0.105\cibracket{.100}{.110} & 0.790\cibracket{.785}{.796} & 0.802\cibracket{.796}{.808} & \textbf{0.946}\cibracket{.943}{.949} \\
                     & CelebA   & 0.496\cibracket{.488}{.503} & 0.421\cibracket{.414}{.428} & 0.432\cibracket{.425}{.439} & \textbf{0.671}\cibracket{.665}{.678} \\
            \midrule
            SVHN     & CIFAR-10 & 0.980\cibracket{.978}{.981} & 0.954\cibracket{.951}{.957} & 0.984\cibracket{.982}{.985} & \textbf{0.986}\cibracket{.984}{.987} \\
                     & CelebA   & \textbf{0.999}\cibracket{.999}{.999} & 0.995\cibracket{.994}{.995} & 0.997\cibracket{.997}{.998} & 0.996\cibracket{.995}{.996} \\
            \midrule
            CelebA   & CIFAR-10 & 0.728\cibracket{.721}{.734} & 0.688\cibracket{.681}{.694} & 0.717\cibracket{.710}{.723} & \textbf{0.910}\cibracket{.907}{.914} \\
                     & SVHN     & 0.179\cibracket{.175}{.183} & 0.656\cibracket{.651}{.661} & 0.776\cibracket{.771}{.780} & \textbf{0.986}\cibracket{.985}{.987} \\
            \bottomrule
          \end{tabular}
        \end{table}

        In line with prior work, the log-likelihood results in high AUROC values when trained on the simpler SVHN dataset and evaluated against CIFAR-10 or CelebA, but fails consistently when the setup is reversed. Typicality improves on the raw log-likelihoods in setups where their distributions across the two datasets are well separated, with consistently higher likelihoods across the OOD dataset (e.g., CIFAR-10$\rightarrow$SVHN). However, it can fail when the likelihood distributions are strongly overlapping, particularly if the OOD dataset's likelihood distribution is more concentrated (e.g., CIFAR-10$\rightarrow$CelebA; see \cref{appendix:likelihood_distributions} for likelihood distribution plots). DoSE shows similar behaviour to Typicality, albeit with a small improvement. Our method, SITN, consistently outperforms these baselines, even in conditions where the baselines exhibit relatively low performance.
        
        Overall, these results provide empirical evidence that noise samples contain useful information for OOD detection and position SITN as an effective method to leverage that information.

    \subsection{OOD Detection Performance: Synthetic Perturbations}
    \label{sec:perturbation_results}

        \cref{fig:results_perturbations} shows OOD detection performance for the CIFAR-10-trained model across the synthetic perturbations from CIFAR-10-C. SITN shows the most consistent result with a macro average across conditions of 0.85 compared to 0.59 for the Log-likelihood, 0.69 for Typicality, and 0.78 for DoSE.

        SITN achieves its lowest performance for the contrast and fog corruptions. Both Typicality and DoSE perform significantly better on the former, but even worse on the latter. Conversely, SITN performs particularly well for JPEG, snow, or spatter where other methods struggle.

        Overall, these results provide further evidence for the effectiveness of goodness-of-fit testing against the factorised noise prior, showing that it is sensitive to a broad range of perturbations.

        \begin{figure}[t]
            \centering
            \includegraphics[width=\textwidth]{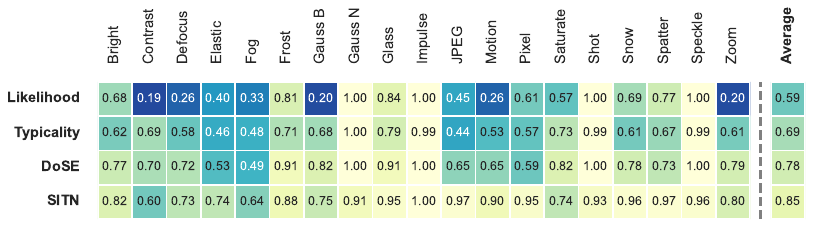}
            \caption{
                OOD detection performance across the CIFAR-10-C corruptions in terms of AUROC. For each corruption, samples with all severity levels were pooled. SITN performs best overall with the highest average performance across corruptions and a minimum AUROC of 0.6 on the contrast perturbation, whereas other methods drop to (or below) chance level in multiple conditions.
            }
            \label{fig:results_perturbations}
        \end{figure}

    \subsection{Metric Extremes}
    \label{sec:metric_extremes}

        In \cref{fig:metric_extremes_svhn} we show the images with highest and lowest OOD scores for each method in the CIFAR-10$\rightarrow$SVHN setup (trained on CIFAR-10, evaluated on the CIFAR-10 test split for ID and on the SVHN test split for OOD). For SITN, 18 samples were tied with a maximum OOD score of 1, from which 5 were selected, approximately preserving the ID/OOD ratio reflected in the 18-way tie. \cref{appendix:metric_extremes_celeba} shows the same visualisations for CIFAR-10$\rightarrow$CelebA, and \cref{appendix:top_150_ood} shows the 150 images with the highest OOD score for each method.

        The complexity-bias of the likelihood reported by \citeauthor*{serra_input_2020}\cite{serra_input_2020} is very conspicuous, as the highest likelihood scores are assigned to simple images with constant backgrounds. Typicality flags these samples as the most OOD, although they are technically considered in-distribution in this setting. This points to an inherent limitation of Typicality for datasets with a broad distribution of likelihoods like CIFAR-10, which contains both the highest and lowest likelihood samples in this case. DoSE inherits some of the same failure cases and additionally flags some of the highest likelihood samples as OOD. This makes sense given that one of the components considered in DoSE is a KDE of the likelihood, such that it will be sensitive to extremely high and low likelihoods. These results show that while Typicality and DoSE mitigate some of the catastrophic failures of using the raw likelihood for OOD detection, they are still affected by its inherent complexity bias. We illustrate this point further in \cref{appendix:complexity_bias} and show that SITN does not exhibit this bias.

        Overall, SITN is robust to the failure modes of likelihood-based methods and the only method to feature actual OOD images among the most OOD-scored samples. We discuss its failure cases in \cref{sec:noise_visualisations}.

        \begin{figure}[t]
            \centering
            \includegraphics[width=\textwidth]{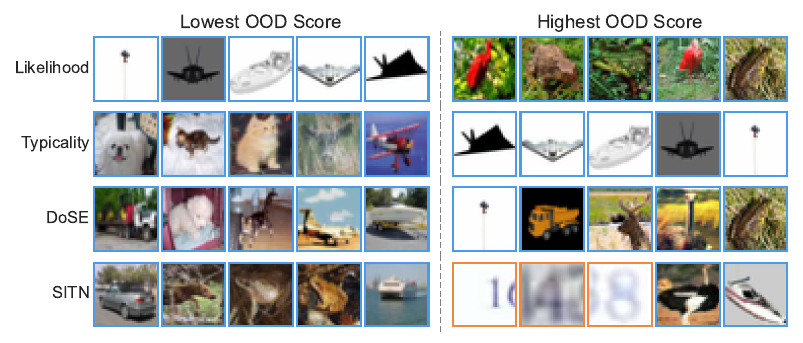}
            \caption{
                Images with the highest and lowest OOD scores for each method in the CIFAR-10$\rightarrow$SVHN setup, i.e. trained on CIFAR-10 and evaluated on the test split of CIFAR-10 as ID and the test split of SVHN as OOD. ID images from CIFAR-10 are marked with a \textcolor[RGB]{76, 155, 232}{blue} border and OOD images from SVHN with an \textcolor[RGB]{244, 136, 58}{orange} border.
                All baseline methods falsely flag in-distribution samples as most OOD, and both Typicality and DoSE inherit failure cases from the complexity-biased likelihood scores.
            }
            \label{fig:metric_extremes_svhn}
        \end{figure}

    \subsection{Ablations}
    \label{sec:ablations}
        
        \cref{appendix:perturbation_ablations} presents an ablation study, showing the performance of the individual metric components in SITN (AD for normality testing and CV for independence testing) across the corruptions from CIFAR-10-C. The two statistics show complementary value across the different perturbations. For instance, normality testing was more sensitive to brightness perturbations, whereas motion artefacts were picked up much better through independence testing.
        
        In \cref{tab:metric_aggregation_ablation}, we further show an ablation for the metric aggregation approach. We compare the max-quantile approach described in \cref{sec:statistics_combination} with the KDE approach used in DoSE. Furthermore, as both Typicality and DoSE use the training data to fit the entropy estimate and KDE, respectively, we also evaluate SITN with the max-quantile or KDE approach fitted on the training data.

        The max-quantile strategy performs better than the KDE approach, while the difference between fitting it on training vs validation data is negligible. If available, we prefer to use validation data to preclude overfitting issues and enable proper calibration as detailed in \cref{appendix:calibration}. We note that our max-quantile strategy cannot be trivially applied to combine the different metrics in DoSE, as it requires all metrics to have an inherent OOD direction.

        \begin{table}[h]
          \caption{Ablation study of SITN metric aggregation. The table shows AUROC performance with 95\% bootstrapped confidence intervals for the CIFAR-10-trained model. Max-quantile consistently outperforms the KDE approach, while the difference between fitting it on training vs validation data is negligible. The first row shows the default SITN configuration.}
          \label{tab:metric_aggregation_ablation}
          \centering
          \begin{tabular}{llcc}
            \toprule
            \textbf{Metric aggregation} & \textbf{Fit split} & \textbf{SVHN} & \textbf{CelebA} \\
            \midrule
            \rowcolor{gray!15} 
            Max Quantile       & Val       & 0.946 [0.943, 0.949] & 0.671 [0.665, 0.678] \\
            Max Quantile       & Train     & 0.948 [0.946, 0.951] & 0.672 [0.666, 0.679] \\
            KDE                & Val       & 0.915 [0.911, 0.918] & 0.615 [0.608, 0.621] \\
            KDE                & Train     & 0.913 [0.909, 0.917] & 0.617 [0.610, 0.624] \\
            \bottomrule
          \end{tabular}
        \end{table}
    
    \subsection{Noise Visualisations}
    \label{sec:noise_visualisations}
        \cref{fig:noise_samples_svhn} visualises images from the CIFAR-10$\rightarrow$SVHN setup (trained on CIFAR-10, evaluated on the CIFAR-10 test split as ID and the SVHN test split as OOD), alongside their corresponding noise samples obtained via backwards integration along the probability flow ODE. The left panel displays samples with the lowest SITN OOD scores—i.e., samples where both normality (assessed via AD) and independence (assessed via CV) are consistent with the standard Gaussian noise prior. The middle and right panels illustrate samples with high SITN OOD scores driven by violations of normality and independence, respectively.

        As expected, ID samples with low OOD scores map seamlessly to typical Gaussian noise. Furthermore, OOD samples flagged by the CV component of SITN exhibit strong checkerboard-like patterns, clearly violating dimensional independence. Interestingly, samples flagged by the AD component contain large patches of constant background, a structure that is visibly retained in the noise space. Because this phenomenon occurs not only in OOD samples but also in CIFAR-10, it reveals a failure mode. While AD accurately identifies a strong deviation from normality, the flow matching model struggles to map constant background to typical Gaussian noise even for in-distribution samples. This limitation is likely related to the used model architecture---in this case a UNet. It is worth emphasising that this behaviour reveals an edge case while the method performs well overall as evidenced by the strong AUROC results in \cref{tab:cross_dataset_results} and the visualisation of a larger set of samples in \cref{appendix:top_150_ood}, which shows the 150 images with highest OOD scores.

        \begin{figure}[t]
            \centering
            \includegraphics[width=\textwidth]{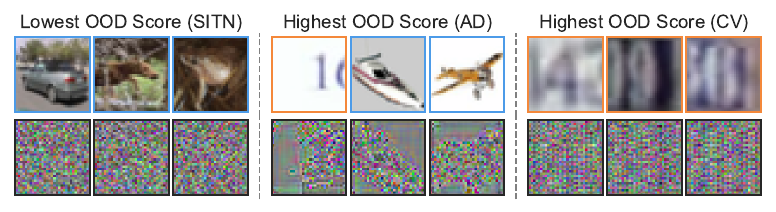}
            \caption{
                Visualisation of images along with their corresponding noise samples. Images from the CIFAR-10$\rightarrow$SVHN setup are shown with ID CIFAR-10 images marked by a \textcolor[RGB]{76, 155, 232}{blue} border and OOD SVHN images by an \textcolor[RGB]{244, 136, 58}{orange} border. \textbf{Left:} samples with lowest SITN OOD score, where the noise is in line with standard Gaussian noise in terms of both the Anderson Darling (AD) statistic and the coefficient of variation (CV) of the power spectrum. \textbf{Middle:} samples with highest OOD score driven by the AD statistic. \textbf{Right:} samples with highest OOD score driven by the CV statistic.
            }
            \label{fig:noise_samples_svhn}
        \end{figure}

\section{Discussion and Conclusions}
\label{sec:discussion}

    In this work, we introduced a framework that casts out-of-distribution detection as a goodness-of-fit test in the noise space of continuous normalising flows. As a concrete instantiation of this framework, we proposed Signal in the Noise (SITN), which probes noise samples against a standard Gaussian prior. SITN exploits the complete factorisation of this prior to operate on individual high-dimensional samples and test them for deviations in marginal normality and dimensional independence. Crucially, SITN offers significant practical and statistical advantages: it requires no access to OOD data, incurs minimal computational overhead---requiring neither additional training nor likelihood evaluations---and allows for the formal derivation of $p$-values for strict Type I error control.

    We rigorously validated the effectiveness of SITN across standard cross-dataset benchmarks and synthetic perturbations, where it consistently outperformed baselines. Moreover, detailed visualisations and empirical analyses revealed that the likelihood-based baselines Typicality and DoSE retain a strong complexity bias inherent to the raw likelihood. SITN, by contrast, does not exhibit this bias.

    Our findings also highlight important avenues for future research. While SITN yields highly competitive overall performance, we observed a specific edge case where the UNet architecture fails to map large, constant background patches to Gaussian noise, causing the Anderson-Darling component to flag the retained spatial structures as atypical. Future work will naturally focus on mitigating this artefact. More broadly, we see great promise in exploring statistics other than AD and CV---such as those that identify other forms of dependence---and in studying and optimising the interplay between the OOD detection method and the underlying generative model. Together, these directions offer a clear path to refine and better understand the design space for OOD detection through goodness-of-fit testing in the noise space.

    Ultimately, by equipping machine learning systems with OOD detection safeguards, methods like SITN bring us a critical step closer to their safe deployment in high-risk, real-world applications.

\begin{ack}
    This project was supported by the Royal Academy of Engineering under the Research Fellowship programme. This work was funded by NatWest Group via the Centre for Purpose-Driven Innovation in Banking.
\end{ack}

\newpage
\sloppy
\printbibliography

\newpage
\appendix
\crefalias{section}{appendix}

\section{Metric Ablations}
\label{appendix:perturbation_ablations}

    \begin{figure}[H]
        \centering
        \includegraphics[width=\textwidth]{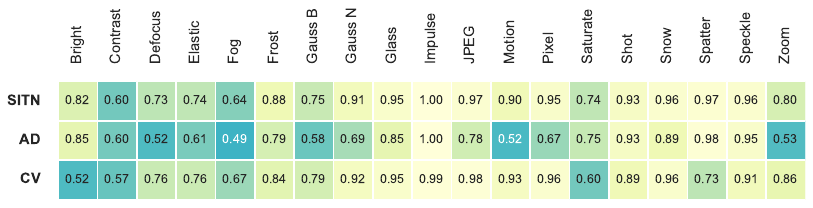}
        \caption{
            Ablation showing the OOD detection performance of the individual statistics \textbf{Anderson Darling (AD)} and the \textbf{coefficient of variation of the power spectrum (CV)} against the \textbf{full method (SITN)}, which combines them. OOD detection performance is shown in terms of AUROC values across the synthetic perturbations from CIFAR-10-C. For each corruption, samples with all severity levels were pooled. The two statistics show complementary value where for example AD is more sensitive to brightness perturbations, whereas CV shows better discrimination for pixellation or motion artefacts.
        }
        \label{fig:perturbations_ablation}
    \end{figure}

\section{Complexity Bias}
\label{appendix:complexity_bias}

    \begin{figure}[H]
        \centering
        \includegraphics[width=\textwidth]{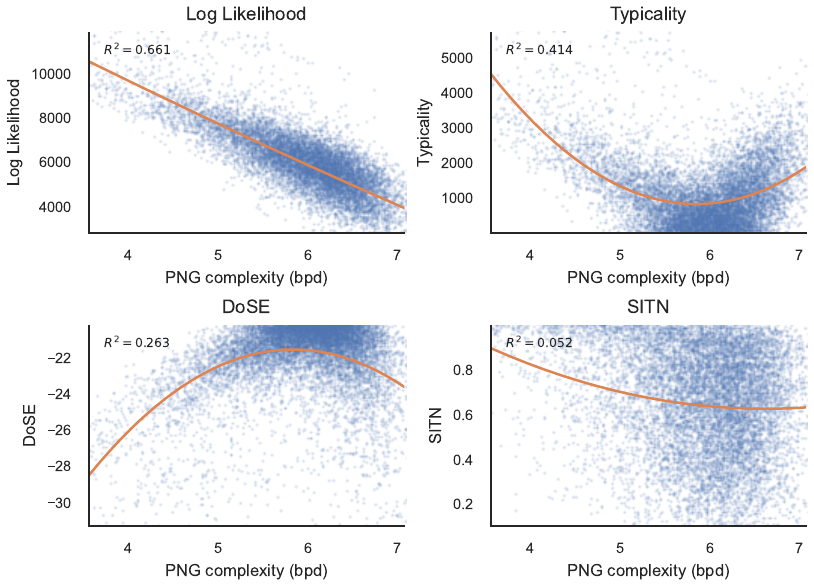}
        \caption{
            Complexity bias of OOD detection metrics. Scatter plots showing the relationship between image complexity (measured by PNG compression size) and the different OOD detection scores for a model trained and evaluated on the CIFAR-10 train and test splits, respectively. The lines show quadratic fits and $R^2$ values are reported alongside. Except for SITN, all metrics show a strong complexity bias.
        }
        \label{fig:complexity_bias}
    \end{figure}

\section{Likelihood Distributions}
\label{appendix:likelihood_distributions}
    
    \begin{figure}[H]
        \centering
        \includegraphics[width=\textwidth]{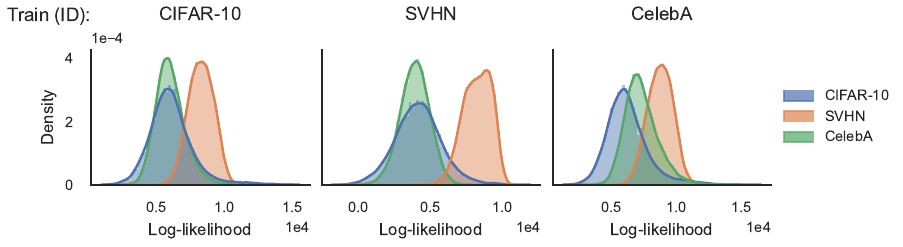}
        \caption{
            Log-Likelihood distributions across the different datasets. Plot titles indicate what dataset the corresponding model was trained on, e.g., the left plot shows the log-likelihood distributions for the model trained on CIFAR-10. In line with prior work, OOD samples from SVHN are consistently assigned higher likelihoods than ID CIFAR-10 (test) samples for the CIFAR-10 model. 
        }
        \label{fig:likelihood_distributions}
    \end{figure}

\section{Metric Extremes on CelebA}
\label{appendix:metric_extremes_celeba}

    \begin{figure}[H]
        \centering
        \includegraphics[width=\textwidth]{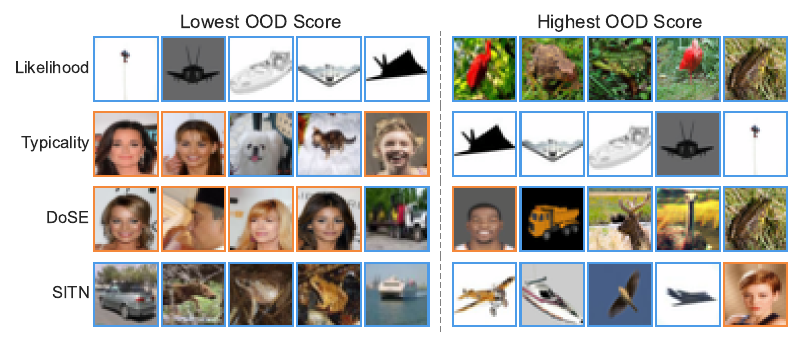}
        \caption{
            Images with the highest and lowest OOD scores for each method in the CIFAR-10$\rightarrow$CelebA setup, i.e. trained on CIFAR-10 and evaluated on the test split of CIFAR-10 as ID and the test split of CelebA as OOD. ID images from CIFAR-10 are marked with a \textcolor[RGB]{76, 155, 232}{blue} border and OOD images from CelebA with an \textcolor[RGB]{244, 136, 58}{orange} border. Both Typicality and DoSE inherit failure cases from the complexity-biased likelihood scores and assign very low OOD scores to OOD images from CelebA.
        }
        \label{fig:metric_extremes_celeba}
    \end{figure}

\section{Additional Baselines}
\label{appendix:additional_baselines}
    In addition to the baseline methods described in \cref{sec:baselines}, we also evaluated the complexity correction method by \citeauthor*{serra_input_2020} \cite{serra_input_2020} and the ensemble method WAIC by \citeauthor*{choi_waic_2019} \cite{choi_waic_2019}.
    
    For the complexity correction, we used PNG compression as the complexity metric. Specifically, PNG complexity $c(\X)$ was computed in units of bits per dimension (bpd) for each image (after resizing to 32x32 for CelebA and without channel-wise normalisation by the dataset mean and standard deviation). As in the original publication, the log-likelihoods $L(\X)$ were then also expressed in bpd and the complexity-corrected OOD score was computed as $S(\X) = -L(\X) - c(\X)$.
    
    We see this method as a less relevant benchmark for SITN due to its additional requirements of a suitable complexity metric that may not be available in less standard domains. Nevertheless, SITN consistently outperforms the complexity correction approach across all conditions (see \cref{tab:additional_baselines}).
    Our evaluations also include results for models trained on SVHN and CelebA, which were not previously reported by \citeauthor*{serra_input_2020} and reveal severe failures of the method for the CelebA$\rightarrow$CIFAR-10 and CelebA$\rightarrow$SVHN setups.

    \begin{table}[h]
      \caption{Comparison to additional baselines for cross-dataset OOD detection performance in terms of AUROC. Values in brackets denote bootstrapped 95\% CIs and results for the best method are highlighted in \textbf{bold}.}
      \label{tab:additional_baselines}
      \centering
      \begin{tabular}{llcccc}
        \toprule
        \textbf{Train (ID)} & \textbf{Test (OOD)} & \textbf{Log-likelihood} & \textbf{Complexity} & \textbf{WAIC} & \textbf{SITN (Ours)} \\
        \midrule
        CIFAR-10 & SVHN     & 0.105\cibracket{.100}{.110} & 0.782\cibracket{.777}{.787} & 0.332\cibracket{.326}{.338} & \textbf{0.946}\cibracket{.943}{.949} \\
                 & CelebA   & 0.496\cibracket{.488}{.503} & 0.615\cibracket{.608}{.623} & 0.450\cibracket{.444}{.458} & \textbf{0.671}\cibracket{.665}{.678} \\
        \midrule
        SVHN     & CIFAR-10 & 0.980\cibracket{.978}{.981} & 0.729\cibracket{.723}{.734} & 0.745\cibracket{.739}{.750} & \textbf{0.986}\cibracket{.984}{.987} \\
                 & CelebA   & \textbf{0.999}\cibracket{.999}{.999} & 0.884\cibracket{.881}{.887} & 0.736\cibracket{.732}{.741} & 0.996\cibracket{.995}{.996} \\
        \midrule
        CelebA   & CIFAR-10 & 0.728\cibracket{.721}{.734} & 0.224\cibracket{.218}{.230} & 0.609\cibracket{.603}{.616} & \textbf{0.910}\cibracket{.907}{.914} \\
                 & SVHN     & 0.179\cibracket{.175}{.183} & 0.077\cibracket{.075}{.080} & 0.449\cibracket{.444}{.454} & \textbf{0.986}\cibracket{.985}{.987} \\
        \bottomrule
      \end{tabular}
    \end{table}
    
    For WAIC, we followed the original publication by training an ensemble of five models with different parameter initialisations. The WAIC score is then computed as the mean log-likelihood minus the variance of the log-likelihoods across the five models, with lower scores interpreted as more out-of-distribution. As explained in \cite{choi_waic_2019}, the variance term is intended to act as a correction term to penalise high epistemic uncertainty in the ensemble.

    Compared to SITN, WAIC introduces substantial computational overhead, requiring the training of an entire ensemble of generative models along with likelihood evaluations for every model in the ensemble during inference time. As shown in \cref{tab:additional_baselines}, SITN nevertheless outperforms WAIC across all conditions. In fact, we find that WAIC performs quite poorly throughout our experiments, which we investigated further in \cref{appendix:waic_failure}.

\section{Further Investigation of WAIC Performance}
\label{appendix:waic_failure}
    In \cref{appendix:additional_baselines}, we reported the performance of WAIC \cite{choi_waic_2019} and observed it to perform quite poorly overall. To further investigate these results, we visualised the distributions of the ensemble mean and variance of the likelihood evaluations. The variance term is intended to serve as a correction term to penalise epistemic uncertainty, which the authors expect to be larger for OOD samples, therefore providing an OOD signal. As shown in \cref{fig:waic_histograms} (right panel), we did not observe this behaviour. The distribution of the ensemble variance was highly overlapping between in- and out-of-distribution samples, explaining why WAIC underperformed in our experiments.

    \begin{figure}[H]
        \centering
        \includegraphics[width=\textwidth]{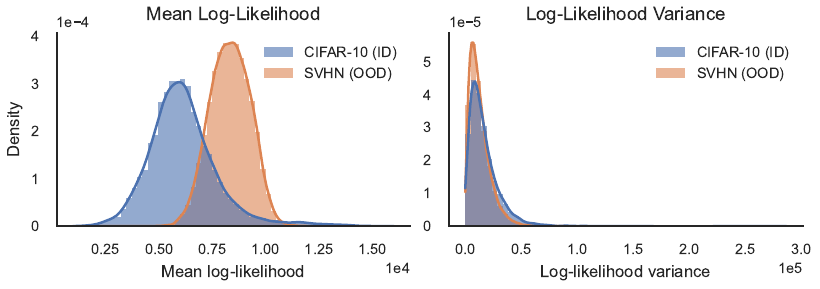}
        \caption{
            Distribution of the ensemble mean and variance of the log-likelihoods across the CIFAR-10 and SVHN test splits for a CIFAR-10-trained ensemble. WAIC is based on the idea of using the variance of the likelihoods (across ensemble models) as an OOD signal. However, in our experiments with flow matching models, we do not observe consistently higher variance for OOD samples.
        }
        \label{fig:waic_histograms}
    \end{figure}

    We hypothesise that the flow matching models used in our setup produce much more stable likelihood evaluations than the ensemble of Glow models used in the original work. This would be in line with recent findings about the stability of flow matching \cite{briq_amazing_2026}.

\section{DoSE and Typicality Fit on Validation Data}
\label{appendix:baselines_on_val}

    \begin{table}[H]
      \caption{
        Control experiments using validation data to fit DoSE and the entropy estimate for Typicality. While both methods use training data in the original publications, these experiments test if instead using validation data meaningfully changes their performance. No consistent improvement is observed: both methods perform slightly better with validation data on SVHN, but worse on CelebA. Rows corresponding to the original configurations used in the main paper are highlighted in gray.
      }
      \label{tab:baselines_on_val}
      \centering
      \begin{tabular}{lllcc}
        \toprule
        \textbf{Train (ID)} & \textbf{Test (OOD)} & \textbf{Fit Split} & \textbf{Typicality} & \textbf{DoSE} \\
        \midrule
        CIFAR-10 & SVHN   & Val   & 0.803\cibracket{.798}{.809} & 0.812\cibracket{.806}{.817} \\
        \rowcolor{gray!15} \cellcolor{white} & \cellcolor{white} & Train & 0.790\cibracket{.785}{.796} & 0.802\cibracket{.796}{.808} \\
        \cmidrule{2-5}
                 & CelebA & Val   & 0.420\cibracket{.413}{.427} & 0.429\cibracket{.422}{.436} \\
        \rowcolor{gray!15} \cellcolor{white} & \cellcolor{white} & Train & 0.421\cibracket{.414}{.428} & 0.432\cibracket{.425}{.439} \\
        \bottomrule
      \end{tabular}
    \end{table}

\section{Calibration}
\label{appendix:calibration}

    In the following, we demonstrate how $S_{\text{SITN}}$ can be calibrated, allowing one to specify a tolerable false positive rate, $\alpha$. This is accomplished by splitting the calibration set, $C$, into disjoint subsets, $C_1$ and $C_2$, then using $C_1$ to estimate the empirical CDFs of the constituent $S_{\text{AD}}$ and $S_{\text{CV}}$ statistics, and $C_2$ to estimate the empirical CDF, $\hat{F}_{S_{\text{SITN}}}$ of $S_{\text{SITN}}$. Finally, one defines the ID versus OOD classification rule as
    \begin{equation}
        h(\X) = \begin{cases*}
            \text{OOD} & if $\hat{F}_{S_{\text{SITN}}}(S_{\text{SITN}}) \geq 1 - \alpha$ \\
            \text{ID} & otherwise.
        \end{cases*}
    \end{equation}
    We show below how this controls false positive (i.e., Type I error) rate, and provide some analysis about false negative (i.e., Type II error) rates under some assumptions.

    \subsection{Type I Error Control}
        A hypothesis test has a Type I error (i.e., false positive) rate of $\alpha$ if the probability of falsely rejecting the null hypothesis, $H_{0}$, is $\alpha$. In the context of OOD detection, this can be stated formally as
        \begin{equation}
            \Pid(h(X) = \textup{OOD}) = \alpha.
        \end{equation}
        We prove Type I error control for a slightly more general class of composite hypotheses that can be decomposed into a conjunction over $k$ constituent hypotheses. The approach generalises by simply estimating the empirical CDF for all $k$ associated statistics, and taking a $k$-way max instead of a two-way max.
        
        Let us write $M = \max_{i} F_{S_i}(S_{i})$;
        then
        \begin{align}
            \Pid(h(X) = \text{OOD}) &= \Pid(F_M(M) \geq 1 - \alpha) \\
            &= 1 - \Pid(F_M(M) < 1 - \alpha) \\
            &= \alpha,
        \end{align}
        because $F_M(M)$ follows a uniform distribution. This tells us that controlling the Type I error rate for the composite hypothesis test can be accomplished by estimating $F_{M}$ with bounded error. We demonstrate that our estimation procedure has bounded error in the theorem below.

        \begin{theorem}
            Let $\hat{M}=\max_{i}\hat{F}_{S_i}(S_{i})$ be our estimate of $M$, where $\hat{F}_{S_i}$ are estimated using $C_1$ and $\hat{F}_{\hat{M}}$ is estimated using $C_2$. Then, with probability at least $1-\delta$,
            \begin{equation*}
                \|F_{M} - \hat{F}_{\hat{M}}\|_\infty \leq 2k\sqrt{\frac{1}{2|C_1|}\log{\frac{2k}{\delta}}} + \sqrt{\frac{1}{2|C_2|}\log{\frac{2}{\delta}}}.
            \end{equation*}
            \label{theorem:empirical_cdf_bound}
        \end{theorem}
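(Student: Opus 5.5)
We split the error with the triangle inequality around an intermediate object, the true CDF $F_{\hat M}$ of the plug-in statistic $\hat M$ when the $\hat F_{S_i}$ are held fixed (i.e.\ conditional on $C_1$):
\begin{equation*}
    \|F_M - \hat F_{\hat M}\|_\infty \le \|F_M - F_{\hat M}\|_\infty + \|F_{\hat M} - \hat F_{\hat M}\|_\infty .
\end{equation*}
The second term is a pure empirical-CDF error. Conditional on $C_1$, the values $\hat M(X)$ for $X\in C_2$ are i.i.d.\ draws from $F_{\hat M}$, because $C_2$ is disjoint from $C_1$. The Dvoretzky--Kiefer--Wolfowitz inequality with Massart's constant then gives $\|F_{\hat M}-\hat F_{\hat M}\|_\infty \le \sqrt{\frac{1}{2|C_2|}\log\frac{2}{\delta_2}}$ with probability at least $1-\delta_2$. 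Since this holds for every realisation of $C_1$, it also holds unconditionally.

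For the first term we apply DKW to each constituent statistic on $C_1$. With probability at least $1-\delta_1$, a union bound over the $k$ statistics gives, simultaneously for all $i$,
\begin{equation*}
    \|\hat F_{S_i}-F_{S_i}\|_\infty \le \varepsilon := \sqrt{\frac{1}{2|C_1|}\log\frac{2k}{\delta_1}} .
\end{equation*}
On this event $|\hat M - M| \le \varepsilon$ pointwise, because the max is $1$-Lipschitz in the sup norm. It follows that $\{M\le t-\varepsilon\}\subseteq\{\hat M\le t\}\subseteq\{M\le t+\varepsilon\}$, and hence
\begin{equation*}
    F_M(t-\varepsilon)\le F_{\hat M}(t)\le F_M(t+\varepsilon).
\end{equation*}
We then bound the modulus of continuity of $F_M$. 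By a union bound,
\begin{equation*}
    \Pr\bigl(M\in(t,t+\varepsilon]\bigr)\le \sum_i \Pr\bigl(F_{S_i}(S_i)\in(t,t+\varepsilon]\bigr)\le k\varepsilon,
\end{equation*}
since each $F_{S_i}(S_i)$ is uniform. This is the step that produces the factor $k$ (it requires continuous $S_i$, which we will assume, as the paper already does when it claims uniformity). Applying the same bound on the other side, $|F_{\hat M}(t)-F_M(t)|\le k\varepsilon$. A slightly looser accounting, absorbing the one-sided shifts on both sides, gives the stated $2k\varepsilon$, so $2k\varepsilon$ is certainly an upper bound.

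Finally, setting $\delta_1=\delta_2=\delta$ gives the two logarithmic terms $\log\frac{2k}{\delta}$ and $\log\frac{2}{\delta}$ exactly as stated, but with a total failure probability of $2\delta$. To get $1-\delta$ we would instead use $\delta/2$ for each event, which changes the constants inside the logarithms. As stated, the confidence level in the theorem is therefore off by a factor of $2$: the proof as I would write it delivers $1-2\delta$ with the stated constants, or $1-\delta$ with $\log\frac{4k}{\delta}$ and $\log\frac{4}{\delta}$. I would present the latter as the corrected version.

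The main obstacle is the first term. $\hat M$ is defined through data-dependent transforms, so $F_{\hat M}$ must be treated conditionally on $C_1$. The argument has to be careful that the sandwich inequality holds for the same fixed realisation of $C_1$ on which the DKW event for $C_1$ occurred. It also relies on continuity of the $S_i$ to get the Lipschitz control of $F_M$.
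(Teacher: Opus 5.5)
Your proposal is correct and follows the paper's proof essentially step for step: the triangle inequality through $F_{\hat M}$, DKW on $C_2$ conditional on $C_1$, union-bounded DKW on $C_1$, the sandwich $F_M(t-\varepsilon)\le F_{\hat M}(t)\le F_M(t+\varepsilon)$, and a union bound over the uniform $F_{S_i}(S_i)$. Your $\delta$-bookkeeping objection is also right, since the paper simply adds two bounds that each hold with probability $1-\delta$ and claims $1-\delta$ overall. Your conclusion that the statement itself must be weakened goes beyond what you showed, though: running each event at level $\delta/2$ and using your sharper one-sided bound $k\varepsilon$ instead of $2k\varepsilon$ already yields the stated right-hand side with probability at least $1-\delta$ whenever $|C_2|\ge |C_1|/(2k^2)$, because the unused factor of $2$ in the first term absorbs the extra $\log 2$ that the $\delta/2$ split adds to the $C_2$ term, and this covers the equal split used in the corollary.
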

        The proof is given in \cref{sec:proof}. Taking $\hat{M} = S_{\text{SITN}}$ and $k=2$ with the two statistics identified in the main paper, we obtain the following corollary for $|C|=n$ and $|C_1|=|C_2|=\frac{n}{2}$.
        \begin{corollary}
            The Type I error of the SITN test is bounded from above, with probability at least $1-\delta$ over the draws of the calibration sets, by
            \begin{equation*}
                \Pid(h(X) = \text{OOD}) \leq \alpha + 4 \sqrt{\frac{1}{n} \log \frac{4}{\delta}} + \sqrt{\frac{1}{n} \log \frac{2}{\delta}}.
            \end{equation*}
        \end{corollary}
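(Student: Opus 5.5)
The corollary follows by specialising Theorem~\ref{theorem:empirical_cdf_bound} and combining it with the exact Type~I computation above. First I will record that $\Pid(F_M(M)\geq 1-\alpha)=\alpha$, which holds because $F_M(M)$ is uniform on $[0,1]$. This uses continuity of the distribution of $M$, which I will assume, as the paper does implicitly.

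Next I will compare the deployed rule with this oracle rule. The rule $h$ declares OOD when $\hat F_{\hat M}(\hat M)\geq 1-\alpha$. On the event $E$ of Theorem~\ref{theorem:empirical_cdf_bound}, which has probability at least $1-\delta$ over the calibration draws, we have $\|F_M-\hat F_{\hat M}\|_\infty\leq \varepsilon$. Conditional on $C_1,C_2$, I will bound
\begin{equation*}
\Pid\bigl(\hat F_{\hat M}(\hat M)\geq 1-\alpha\bigr)\leq \Pid\bigl(F_{\hat M}(\hat M)\geq 1-\alpha-\varepsilon'\bigr),
\end{equation*}
where $F_{\hat M}$ is the true CDF of $\hat M$ given $C_1$. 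Given $C_1$, the statistic $F_{\hat M}(\hat M)$ is uniform, so the right-hand side is at most $\alpha+\varepsilon'$.

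The main obstacle is which deviation $\varepsilon'$ actually appears. The theorem compares $\hat F_{\hat M}$ with $F_M$, not with $F_{\hat M}$, and it is evaluated at $\hat M$ rather than $M$. I will handle this by using the decomposition already built into the theorem's bound. The DKW term in $|C_2|$ controls $\|\hat F_{\hat M}-F_{\hat M}\|_\infty$. The $2k$ term in $|C_1|$ controls the discrepancy between $\hat M$ and $M$, and hence between $F_{\hat M}$ and $F_M$. So on $E$ the quantity $\varepsilon'$ is at most the right-hand side $\varepsilon$ of the theorem. If the theorem's statement is used only as a black box, I would instead bound $\sup_t|F_M(t)-F_{\hat M}(t)|$ through the uniform perturbation $|\hat F_{S_i}-F_{S_i}|$, which changes the max by at most $\max_i\|\hat F_{S_i}-F_{S_i}\|_\infty$. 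That recovers the same $2k$ term.

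Finally I will substitute $k=2$ and $|C_1|=|C_2|=n/2$. This gives
\begin{equation*}
2k\sqrt{\tfrac{1}{2|C_1|}\log\tfrac{2k}{\delta}}=4\sqrt{\tfrac1n\log\tfrac4\delta}
\quad\text{and}\quad
\sqrt{\tfrac{1}{2|C_2|}\log\tfrac2\delta}=\sqrt{\tfrac1n\log\tfrac2\delta},
\end{equation*}
which yields exactly the stated bound with probability at least $1-\delta$.
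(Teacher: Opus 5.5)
Your substitution $k=2$, $|C_1|=|C_2|=n/2$ is correct. It is all the paper does: it treats the passage from $\|F_M-\hat F_{\hat M}\|_\infty\leq\varepsilon$ to the Type~I bound as immediate from the uniformity of $F_M(M)$. The gap is in the step you add to make this passage explicit. Your claim that, given $C_1$, $F_{\hat M}(\hat M)$ is uniform is false. The statistic $\hat M=\max_i\hat F_{S_i}(S_i)$ takes values in the grid $\{0,1/|C_1|,\dots,1\}$, so conditionally on $C_1$ it is discrete; the paper itself reports 18 test samples tied at score $1$. For any $Y$ with CDF $G$, $G(Y)$ is stochastically \emph{larger} than uniform, so $P(G(Y)\geq c)$ can exceed $1-c$, which is the anti-conservative direction. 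In general one only has $P(G(Y)\geq c)\leq 1-c+\sup_y P(Y=y)$.

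This is not cosmetic. Your chain never uses $|C_1|$, so it would give Type~I error at most $\alpha+\sqrt{\frac{1}{2|C_2|}\log\frac{2}{\delta}}$ for every $|C_1|$. Yet take $|C_1|=1$. Then each $\hat F_{S_i}(S_i)$ is an indicator and $\hat M\in\{0,1\}$. The rule rejects whenever $\hat M=1$, because $\hat F_{\hat M}(1)=1$. So the Type~I error is at least $1-F_{S_1}(s)$, where $s$ is the single $C_1$ value of $S_1$, and this quantity is uniform on $[0,1]$ over the calibration draw.

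Relatedly, the $2k$ term does not do the job you assign it: the discrepancy between $F_{\hat M}$ and $F_M$ never enters your inequality chain. In your route its real role is to pay for the ties. Let $\epsilon_1=\sqrt{\frac{1}{2|C_1|}\log\frac{2k}{\delta}}$ and work on the DKW event for the $k$ statistics on $C_1$. There, for continuous $S_i$, the $F_{S_i}$-mass between consecutive order statistics of the $C_1$ sample is at most $2\epsilon_1$. Hence every atom of $\hat F_{S_i}(S_i)$ given $C_1$ has mass at most $2\epsilon_1$, and by a union bound every atom of $\hat M$ has mass at most $2k\epsilon_1$. Combine this with your first inequality and the conditional DKW bound $\varepsilon'\leq\sqrt{\frac{1}{2|C_2|}\log\frac{2}{\delta}}$ (DKW also holds for discrete laws). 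This gives $\Pid(h(X)=\text{OOD})\leq\alpha+\varepsilon'+2k\epsilon_1$, which is exactly the corollary after your substitution. Note that this uses the two DKW events inside the theorem's proof, not its conclusion event $E$; $\|\hat F_{\hat M}-F_{\hat M}\|_\infty\leq\varepsilon$ does not follow from $E$. So the confidence level is $1-2\delta$ unless you split $\delta$ between the two events, which is the same slack already present in the paper's proof of the theorem.
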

        This tells us that, as the size of the calibration sets increases, the control on the Type I error becomes more precise.
    
    \subsection{Proof of \cref{theorem:empirical_cdf_bound}}
    \label{sec:proof}
        The proof of this theorem will make use of the DKW inequality, given below.
        \begin{theorem}[Dvoretzky-Kiefer-Wolfowitz (DKW) inequality \cite{dvoretzky_1956}]
            \label{theorem:dkw}
            Let $\hat{F}$ be an empirical distribution function based on $n$ i.i.d. samples from a distribution function $F$. Then the probability of maximum error between the distributions, $\sup_{x\in \mathbb{R}}(\hat{F}(x) - F(x))$ exceeding $\epsilon \in \mathbb{R}$ can be bounded by
            \begin{equation*}
                P(\sup_{x\in \mathbb{R}}|\hat{F}(x) - F(x)| \geq \epsilon) \leq 2e^{-2n\epsilon^{2}}.
            \end{equation*}
        \end{theorem}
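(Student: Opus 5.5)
The plan is to follow Massart's route to the sharp constant. I would first reduce to the uniform case, then prove a one-sided bound of $e^{-2n\epsilon^2}$, and finally combine the two tails with a union bound.

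\textbf{Reduction to the uniform case and trivial range.} Let $U_1,\dots,U_n$ be i.i.d. uniform on $[0,1]$ and define $X_j = F^{-1}(U_j)$ with the generalised inverse $F^{-1}(u)=\inf\{x : F(x)\geq u\}$. Then $X_j \sim F$, and $\{X_j \le x\} = \{U_j \le F(x)\}$. Hence $\hat F(x) = \hat G(F(x))$, where $\hat G$ is the empirical CDF of the $U_j$. Consequently
\[
\sup_x |\hat F(x)-F(x)| = \sup_{t\in F(\mathbb{R})} |\hat G(t)-t| \le \sup_{t\in[0,1]}|\hat G(t)-t|,
\]
so it suffices to treat the uniform case; atoms of $F$ only shrink the supremum.

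Two simple observations follow. The bound is trivial whenever $2e^{-2n\epsilon^2}\ge 1$, that is when $n\epsilon^2\le \tfrac12\log 2$, and the probability is zero when $\epsilon>1$. So only $\epsilon\in(\sqrt{\log 2/(2n)},1]$ needs work.

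\textbf{One-sided bound.} By the union bound, the two-sided claim follows from the one-sided estimates $P(\sup_t(\hat G(t)-t)\ge\epsilon)\le e^{-2n\epsilon^2}$ and $P(\sup_t(t-\hat G(t))\ge\epsilon)\le e^{-2n\epsilon^2}$. The second reduces to the first via the map $U_j\mapsto 1-U_j$, up to left/right limits, which do not affect the supremum.

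For the first, I would use the exact Smirnov--Birnbaum--Tingey formula
\[
P\Bigl(\sup_t(\hat G(t)-t)\ge\epsilon\Bigr)=\epsilon\sum_{j=0}^{\lfloor n(1-\epsilon)\rfloor}\binom{n}{j}\Bigl(1-\epsilon-\tfrac jn\Bigr)^{n-j}\Bigl(\epsilon+\tfrac jn\Bigr)^{j-1}.
\]
This is obtained by conditioning on the first index $j$ at which the empirical CDF crosses the line $t+\epsilon$, together with a ballot-type (Abel) identity for the uniform order statistics. I would derive it first, since it is a direct combinatorial computation.

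\textbf{Main obstacle: the sharp constant.} Bounding the above sum by $e^{-2n\epsilon^2}$ uniformly over the relevant range of $\epsilon$ is the hard step. A Chernoff bound at a single point, or a chaining argument, easily gives $Ce^{-2n\epsilon^2}$ with some $C>1$, but not the constant $1$.

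To get the sharp constant, I would follow Massart. Each summand is written as a binomial-type term, and $\log\bigl[(1-\epsilon-j/n)^{n-j}(\epsilon+j/n)^{j}\bigr]$ is compared with a Gaussian-type exponent via the entropy inequality $h(p,q)\ge 2(p-q)^2$ for the Bernoulli KL divergence. This bounds the sum by a Riemann-sum approximation to a Gaussian integral, which is controlled by $e^{-2n\epsilon^2}$ once $n\epsilon^2$ exceeds the threshold $\tfrac12\log 2$ from the first step. The delicate part is the bookkeeping of the $(\epsilon+j/n)^{-1}$ prefactor against the Gaussian integral near the boundary of that range; I would handle small $n$ separately by direct computation if needed.

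Combining the two one-sided bounds then gives $P(\sup_x|\hat F(x)-F(x)|\ge\epsilon)\le 2e^{-2n\epsilon^2}$, as stated.
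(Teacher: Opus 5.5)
The paper does not prove this statement; it imports it as a black box. Strictly, the constant $2$ is Massart's (1990) sharpening, since the 1956 result has an unspecified constant.

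Your outer structure is sound and matches how Massart's theorem is organised. The quantile coupling $\hat F=\hat G\circ F$, discarding the range where $2e^{-2n\epsilon^2}\ge 1$, the reflection $U_j\mapsto 1-U_j$, and the union bound are all correct. The Birnbaum--Tingey formula you quote is also right: it returns $1-\epsilon$ for $n=1$, and $1-\epsilon-\epsilon^2$ for $n=2$ with $\epsilon\le\tfrac12$. One caveat: the statement must be read with $\epsilon>0$. For $\epsilon<-\sqrt{\log 2/(2n)}$ the left-hand side equals $1$ and the inequality is false, and your case split passes over this silently.

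The genuine gap is the one-sided bound with constant $1$. That bound is the whole content of the theorem, and the mechanism you sketch cannot deliver it.

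\textbf{The named tool gives no decay in $j$.} Write the $j$-th summand as $\frac{\epsilon}{p_j}\binom{n}{j}p_j^{\,j}(1-p_j)^{n-j}$ with $p_j=\epsilon+j/n$. The binomial factor is then evaluated exactly $n\epsilon$ below its mean, for every $j$. So the inequality $h(p,q)\ge 2(p-q)^2$ gives the same exponent $-2n\epsilon^2$ for all $j$ and produces no Gaussian decay in $j$. At fixed $\lambda=\sqrt n\,\epsilon$, the resulting bound on the sum grows like $n^{1/4}e^{-2\lambda^2}$, coming from the terms with $j\lesssim n\epsilon$. The decay in $j$ lives entirely in the excess $h(p-\epsilon,p)-2\epsilon^2\approx\epsilon^2(1-2p)^2/(2p(1-p))$, which this inequality discards.

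\textbf{There is no slack to spend.} Even with the exact local asymptotics, the sum converges as $n\to\infty$ to $\int_0^1\lambda\,(2\pi p^3(1-p))^{-1/2}e^{-\lambda^2/(2p(1-p))}\,dp=e^{-2\lambda^2}$. This is the Brownian-bridge first-passage identity. Consequently:
\begin{itemize}
\item Any Riemann-sum or Gaussian approximation yields only $e^{-2\lambda^2}(1+o(1))$, with the sign of the error undetermined. It also gives no reason for the threshold $\tfrac12\log 2$.
\item Checking small $n$ by direct computation cannot help, because the bound becomes tight as $n\to\infty$.
\end{itemize}

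What is missing is a non-asymptotic proof that the finite-$n$ correction is nonpositive, uniformly in $n$ and in $\lambda$ on the admissible range. That is precisely the technical core of Massart's paper. As it stands, you must either cite Massart's one-sided theorem, in which case your reductions correctly finish the stated two-sided bound, or settle for an argument with an unspecified constant, which does not give the $2$ claimed here.
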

   
        We will also make use of the following Lemma.
        \begin{lemma}
            Let $\hat{F}$ be an empirical distribution function based on $n$ i.i.d. samples from a continuous distribution with CDF $F$. Then, with probability at least $1-\delta$,
            \begin{equation*}
                \|\hat{F}(x)-F(x)\|_\infty \leq \sqrt{\frac{1}{2n} \log{\frac{2}{\delta}}}.
            \end{equation*}
            \label{lemma:dkw_bound}
        \end{lemma}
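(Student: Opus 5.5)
The lemma is a direct inversion of the DKW inequality (\cref{theorem:dkw}). I will fix $\delta\in(0,1)$ and choose $\epsilon$ so that the DKW failure probability equals $\delta$: setting $2e^{-2n\epsilon^2}=\delta$ gives
\begin{equation*}
\epsilon = \sqrt{\frac{1}{2n}\log\frac{2}{\delta}},
\end{equation*}
which is well defined and positive because $\log(2/\delta)>\log 2>0$.

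With this $\epsilon$, \cref{theorem:dkw} says that the event $\sup_{x}|\hat{F}(x)-F(x)|\geq\epsilon$ has probability at most $\delta$. Passing to the complement, with probability at least $1-\delta$ we have $\sup_x|\hat{F}(x)-F(x)|<\epsilon$. This strict inequality implies the non-strict bound in the statement, since $\|\hat{F}-F\|_\infty$ is by definition $\sup_x|\hat{F}(x)-F(x)|$.

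There is no real obstacle; the only points needing care are bookkeeping. The DKW inequality is stated with the one-sided expression $\sup_x(\hat F(x)-F(x))$ in words but with the absolute value in the display. I will use the two-sided version with constant $2$, which is Massart's tight form and is exactly what is displayed. The continuity assumption on $F$ is not needed for DKW itself. It is included because it will be used later: continuity makes $F_{S_i}(S_i)$ uniform, which the main theorem relies on.
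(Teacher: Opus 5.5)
Your proposal is correct and follows the same route as the paper: both invert the DKW tail bound $2e^{-2n\epsilon^2}$ to get $\epsilon=\sqrt{\frac{1}{2n}\log\frac{2}{\delta}}$ and then pass to the complementary event. Your version fixes $\delta$ first and solves for $\epsilon$, which is slightly cleaner than the paper's argument; the paper instead defines $\delta$ as the failure probability at a given $\epsilon$ and then bounds $\epsilon$.
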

        
        \begin{proof}[Proof of Lemma \ref{lemma:dkw_bound}]
            Let $\delta = P(\sup_{x\in \mathbb{R}}|\hat{F}(x) - F(x)| \geq \epsilon)$. Then, by the DKW Theorem (Theorem \ref{theorem:dkw})
            \begin{align*}
                \delta \leq 2e^{-2n\epsilon^{2}}\\
                \implies \epsilon \leq \sqrt{\frac{1}{2n} \log{\frac{2}{\delta}}}
            \end{align*}
            Therefore, with at least probability $1-\delta, \sup_{x\in \mathbb{R}}|\hat{F}(x)-F(x)|\leq\epsilon$. It follows that, with probability $1-\delta$, that $\|\hat{F}(x)-F(x)\|_\infty \leq \sqrt{\frac{1}{2n} \log{\frac{2}{\delta}}}$  
        \end{proof}
    
        This lemma can be easily extended to the case with $k$ hypotheses as follows
        \begin{lemma}
            \label{lemma:dkw_K_bound}
            Let $\{F_{i}\}_{i=1}^k$ be a family of continuous distributions and $\{\hat{F}_{i}\}_{i=1}^k$ be a family of empirical distribution functions, where $\hat{F}_{i}$ is estimated based on $n$ i.i.d. draws from the corresponding distribution. Then, with probability $1-\delta$;
            \begin{equation*}
                \max_{i}\|F_{i}(x) - \hat{F}_{i}(x)\|_\infty \leq \sqrt{\frac{1}{2n}\log{\frac{2k}{\delta}}}.
            \end{equation*}
        \end{lemma}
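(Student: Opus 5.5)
The bound follows from a union bound over the $k$ estimated CDFs, with Lemma~\ref{lemma:dkw_bound} applied to each at a reduced confidence level. For each $i\in\{1,\dots,k\}$ let $E_i$ be the event
\begin{equation*}
\|\hat{F}_i - F_i\|_\infty > \epsilon, \qquad \epsilon = \sqrt{\frac{1}{2n}\log\frac{2k}{\delta}},
\end{equation*}
where $n$ is the number of i.i.d.\ draws used to estimate $\hat{F}_i$.

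Since $F_i$ is continuous, Lemma~\ref{lemma:dkw_bound} applies to each $i$ individually. Invoking it with confidence parameter $\delta/k$ gives $P(E_i)\le \delta/k$, because
\begin{equation*}
\sqrt{\frac{1}{2n}\log\frac{2}{\delta/k}} = \sqrt{\frac{1}{2n}\log\frac{2k}{\delta}} = \epsilon.
\end{equation*}
Equivalently, one can plug $\epsilon$ directly into the DKW inequality (Theorem~\ref{theorem:dkw}): this gives $P(E_i)\le 2e^{-2n\epsilon^2} = \delta/k$.

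By the union bound, $P\big(\bigcup_i E_i\big)\le \sum_{i=1}^k P(E_i) \le \delta$. On the complement of $\bigcup_i E_i$, which has probability at least $1-\delta$, every $i$ satisfies $\|F_i-\hat{F}_i\|_\infty\le\epsilon$. Hence the maximum over $i$ is also at most $\epsilon$, which is the claim.

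The union bound needs no independence between the $k$ estimates. This matters here because in the intended application all $\hat{F}_i$ are built from the same calibration set $C_1$, so they are strongly dependent. The only care required is in the bookkeeping that converts $\delta/k$ into the $\log(2k/\delta)$ term. There is also a minor technicality: Lemma~\ref{lemma:dkw_bound} is stated with a non-strict inequality inside the probability, but this does not affect the conclusion. I do not expect any real obstacle.
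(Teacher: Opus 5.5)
Your proof is correct and follows essentially the same argument as the paper: a union bound over the $k$ per-statistic DKW events, with $\epsilon$ chosen so that $2ke^{-2n\epsilon^2} = \delta$. Your ordering (fix $\delta$, set $\epsilon$, then show $P(E_i)\le \delta/k$) is cleaner than the paper's, which defines $\delta$ as the failure probability itself. Your remark that no independence among the $\hat{F}_i$ is needed is also what makes the bound valid when all statistics are estimated from the same set $C_1$.
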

    
        \begin{proof}[Proof of Lemma \ref{lemma:dkw_K_bound}]
            This result follows from a straightforward application of the union bound to the DKW in Lemma \ref{lemma:dkw_bound}. With probability at least $1-\delta$
            \begin{equation*}
                P(\exists i \in \{1...k\}: \max_{x}|F_{i}(x)-\hat{F}_{i}(x)|\geq\epsilon) \leq \sum_{i=1}^k\mathbb{P}(|\max_{x}F_{i}(x) - \hat{F}_{i}(x)|\geq\epsilon) \leq 2ke^{-2n\epsilon^{2}}
            \end{equation*}
            If $\delta = P(\exists i \in \{1...k\}: \sup_{x}|F_{i}(x)-\hat{F}_{i}(x)|\geq\epsilon)$ then, for $\epsilon = \sqrt{\frac{1}{2n}\log{\frac{2k}{\delta}}}$, with probability $1-\delta$, we have that
            \begin{equation*}
                \max_i\sup_{x}|F_{i}(x) - \hat{F}_{i}(x)| \leq \epsilon.
            \end{equation*}
        \end{proof}
    
        The proof of Theorem \ref{theorem:empirical_cdf_bound} will also make use of the following Lemma.
        \begin{lemma}
            \label{lemma:cdf_M_vs_ecdf_M_bound}
            Let $F_{i}$ be the CDF over test statistic $S_{i}$, and $\hat{F}_{i}$ the empirical CDF of $F$ based on $n$ i.i.d. samples. If $M = \max_{i} F_{i}(S_{i})$ and $\hat{M} = \max_{i} \hat{F}_{i}(S_{i})$ then, with probability at least $1-\delta$,
            \begin{equation*}
                \|F_{M} - F_{\hat{M}}\|_\infty \leq 2k\sqrt{\frac{1}{2n}\log{\frac{2k}{\delta}}}.
            \end{equation*}
        \end{lemma}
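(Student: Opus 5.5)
The idea is to work on the high-probability event supplied by Lemma~\ref{lemma:dkw_K_bound}, and then show that on this event $\hat{M}$ is a uniformly small perturbation of $M$. Throughout, $F_{\hat{M}}$ is the CDF of $\hat{M}$ for a fresh $X\sim\Pid$, conditional on the $\hat{F}_i$ (i.e.\ on $C_1$); the probability $1-\delta$ refers to the draw of the samples used to build the $\hat{F}_i$. Write $\epsilon = \sqrt{\frac{1}{2n}\log\frac{2k}{\delta}}$. Let $E$ be the event $\max_i \sup_x |F_i(x)-\hat{F}_i(x)| \leq \epsilon$. By Lemma~\ref{lemma:dkw_K_bound}, applied with the $k$ statistics $S_i$, assumed to have continuous distributions, $E$ has probability at least $1-\delta$. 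From now on I fix a realisation of the $\hat{F}_i$ in $E$ and argue deterministically about the law of a fresh sample.

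\textbf{Step 1 (pointwise closeness).} On $E$, for every realisation of $(S_1,\dots,S_k)$ we have $|F_i(S_i)-\hat{F}_i(S_i)|\leq\epsilon$ for each $i$. Since the map $(a_1,\dots,a_k)\mapsto\max_i a_i$ is $1$-Lipschitz with respect to the sup norm, this gives $|M-\hat{M}|\leq\epsilon$ almost surely.

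\textbf{Step 2 (sandwich of CDFs).} From Step 1, $\{M\leq t-\epsilon\}\subseteq\{\hat{M}\leq t\}\subseteq\{M\leq t+\epsilon\}$. Hence
\begin{equation*}
F_M(t-\epsilon)\leq F_{\hat{M}}(t)\leq F_M(t+\epsilon),
\end{equation*}
and therefore $|F_{\hat{M}}(t)-F_M(t)|\leq \max\{F_M(t+\epsilon)-F_M(t),\,F_M(t)-F_M(t-\epsilon)\}$.

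\textbf{Step 3 (modulus of continuity of $F_M$).} This is the step that needs care, because the $S_i$ are dependent (AD and CV are computed on the same $\Z$), so $M$ is not simply a maximum of independent uniforms. The plan is to use only the marginals. Each $U_i=F_i(S_i)$ is $\mathrm{Uniform}[0,1]$ by continuity, and if $t<M\leq t+\epsilon$ then some $U_i$ must lie in $(t,t+\epsilon]$. A union bound then gives
\begin{equation*}
F_M(t+\epsilon)-F_M(t)\leq\sum_{i=1}^k P(t<U_i\leq t+\epsilon)\leq k\epsilon,
\end{equation*}
and the same argument on $(t-\epsilon,t]$ gives $F_M(t)-F_M(t-\epsilon)\leq k\epsilon$. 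This holds regardless of the dependence structure among the $S_i$.

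\textbf{Conclusion.} Combining Steps 2 and 3, on $E$ we get $\|F_M-F_{\hat{M}}\|_\infty\leq k\epsilon\leq 2k\epsilon$, which is the claimed bound, in fact with slack of a factor of two. The only delicate points are (i) keeping clear that the supremum is over the fresh-sample law with the $\hat{F}_i$ frozen, and (ii) Step 3, where the union bound is what removes any independence assumption between the statistics.
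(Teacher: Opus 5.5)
Your proof is correct and follows the same route as the paper's: the $1$-Lipschitz property of the max on the event from Lemma~\ref{lemma:dkw_K_bound}, the sandwich $F_M(t-\epsilon)\leq F_{\hat{M}}(t)\leq F_M(t+\epsilon)$, and a union bound over the uniform marginals $U_i=F_i(S_i)$ that needs no independence between the statistics. The only difference is that the paper bounds both one-sided increments at once by $F_M(t+\epsilon)-F_M(t-\epsilon)\leq 2k\epsilon$, whereas you bound each one-sided increment separately by $k\epsilon$, which validly sharpens the constant from $2k$ to $k$.
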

    
        \begin{proof}[Proof of Lemma \ref{lemma:cdf_M_vs_ecdf_M_bound}]
            Construct an upper bound on $|M - \hat{M}|$;
            \begin{align*}
                |M-\hat{M}| &= |\max_{i}F_{i}(S_{i}) - \max_{i}\hat{F}_{i}(S_{i})|\\
                &\leq \max_{i}|F_{i}(S_{i}) - \hat{F}_{i}(S_{i})| \leq \epsilon
            \end{align*}
            with probability $1-\delta$ for $\epsilon = \sqrt{\frac{1}{2n}\log{\frac{2k}{\delta}}}$. This result follows from the monotonicity of the max operation and application of Lemma \ref{lemma:dkw_K_bound}.
    
            Using this bound observe that, if  $M\leq t-\epsilon$ then $\hat{M} \leq t$. Similarly, $\hat{M} \leq t \implies M \leq t+\epsilon$. Therefore $F_{M}(t-\epsilon) \leq F_{\hat{M}}(t) \leq F_{M}(t+\epsilon)$. It follows that
            \begin{alignat*}{2}
                &|F_{\hat{M}}(t)-F_{M}(t)| &&\leq \max(F_{M}(t+\epsilon) - F_{M}(t), F_{M}(t)-F_{M}(t-\epsilon)) \\
                & &&\leq F_{M}(t+\epsilon) -F_{M}(t-\epsilon)\\
                \implies \sup_{t} &| F_{\hat{M}}(t) - F_{M}(t)| &&\leq \sup_{t}(F_{M}(t+\epsilon) -F_{M}(t-\epsilon)).
            \end{alignat*}
    
            Finally, observe that
            \begin{align*}
                F_{M}(t+\epsilon) - F_{M}(t-\epsilon) &= P(\max_{i}S_{i} \leq t+\epsilon) - P(\max_{i}S_{i}\leq t-\epsilon)\\
                & \leq \sum_{i=1}^k P(U_{i} \in (t-\epsilon, t+\epsilon]) 
            \end{align*}
            by application of the union bound. Since $U_{i} = F_{i}(S_{i})$ it is uniformly distributed in $[0,1]$, we have that $P(U_{i} \in (t-\epsilon, t+\epsilon]) = 2\epsilon$. And so, with probability at least $1-\delta$
            \begin{align*}
                \sup_{t}|F_{M}(t) - F_{\hat{M}}(t)| &\leq 2k\epsilon\\
                &= 2k\sqrt{\frac{1}{2n}\log{\frac{2k}{\delta}}}
            \end{align*}
            completing the proof.
        \end{proof}
    
        \begin{proof}[Proof of Theorem \ref{theorem:empirical_cdf_bound}]
            Rewrite $\|F_{M} - \hat{F}_{\hat{M}}\|_\infty$ using the triangle inequality, as follows
            \begin{align*}
                \|F_{M} - \hat{F}_{\hat{M}}\|_\infty &= \|F_{M} - F_{\hat{M}} + F_{\hat{M}} - \hat{F}_{\hat{M}}\|_\infty \\
                &\leq \|F_{M} - F_{\hat{M}}\|_\infty + \|F_{\hat{M}}- \hat{F}_{\hat{M}}\|_\infty.
            \end{align*}
            By application of Lemma \ref{lemma:cdf_M_vs_ecdf_M_bound} and Lemma \ref{lemma:dkw_bound}
            \begin{equation*}
                \|F_{M} - F_{\hat{M}}\|_\infty + \|F_{\hat{M}}- \hat{F}_{\hat{M}}\|_\infty \leq 2k\sqrt{\frac{1}{2|C_{1}|}\log{\frac{2k}{\delta}}} + \sqrt{\frac{1}{2|C_{2}|}\log{\frac{2}{\delta}}}
            \end{equation*}
        \end{proof}

    \subsection{Empirical Demonstration of Type I Error Control}

        \cref{fig:calibration_curves} (left panel) shows calibration curves for the CIFAR-10-trained model evaluated on the CIFAR-10 test data. As expected, the raw max-quantile SITN scores are not calibrated (blue line). While the individual quantile-transformed statistics $\hat{F}_{\text{AD}}(S_{\text{AD}})$ and $\hat{F}_{\text{CV}}(S_{\text{CV}})$ could be interpreted as the fraction of validation samples with equal or lower scores, this interpretation no longer holds after their max-aggregation into the combined SITN OOD score. Using the method described above (\cref{appendix:calibration}), we obtain a well-calibrated score (orange line). We also show Tippett's method (green line) for comparison, which assumes independence between the combined components, and observe that it produced only slightly more conservative scores in this case.

        For comparison and illustration purposes, we also show calibration curves for the same calibration approaches for the combination of two statistics where the independence assumption for Tippett's method is clearly violated (\cref{fig:calibration_curves}, right panel). In this example, we combine the Anderson-Darling statistic and the Kolmogorov-Smirnov statistic, both of which are sensitive to deviations from normality. It can be seen that Tippett's method is much too conservative in this case, whereas calibration on the split validation data, which does not assume independence of the statistics, still yields well-calibrated results. Consequently, when exploring new combinations of test statistics in future work, we caution against relying on blanket independence assumptions and recommend using the procedure described above.
    
        \begin{figure}[H]
            \centering
            \includegraphics[width=2.5in]{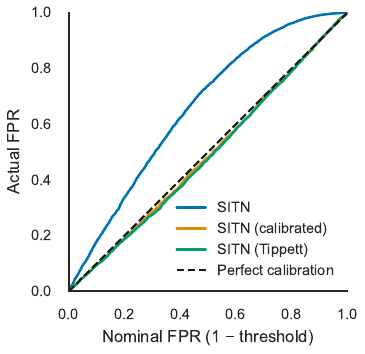}
            \hfill
            \includegraphics[width=2.5in]{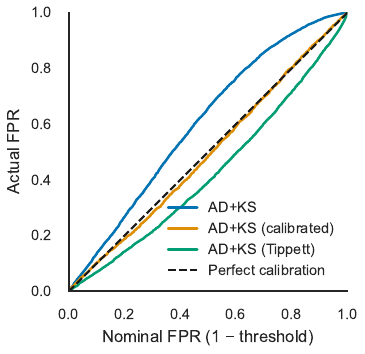}
            \caption{
                Calibration curves showing the actual false positive rate (FPR)---the fraction of in-distribution samples flagged as OOD---as a function of the nominal FPR set by the detection threshold on the CIFAR-10 test data. A perfectly calibrated method lies on the diagonal (dashed). \textbf{Left:} It can be seen that the raw max quantile scores (\textcolor[RGB]{1, 115, 178}{SITN}) are uncalibrated, whereas Tippett's method (\textcolor[RGB]{2, 158, 115}{Tippett}) and calibration on the split validation data (\textcolor[RGB]{222, 143, 5}{calibrated}) provide similar and well-calibrated results. \textbf{Right:} For comparison, calibration with the split validation data (\textcolor[RGB]{222, 143, 5}{calibrated}) remains effective if two statistics that clearly violate the independence assumption for Tippett's method (\textcolor[RGB]{2, 158, 115}{Tippett}) are combined (here Anderson-Darling and Kolmogorov-Smirnov).
            }
            \label{fig:calibration_curves}
        \end{figure}

    \subsection{Type II Error Control}

        \def\pood{p_{\text{OOD}}}
        \def\pid{p_{\text{ID}}}
        \def\pnoise{p_{\text{0}}}
        \def\palt{p_{\text{A}}}
        
        The Type II error rate is given by
        \begin{align}
            \Pood(h_\gamma(X) = \text{ID}) &= \Pood(F_{S_{\text{AD}}}(S_{\text{AD}}) \leq \gamma \land F_{S_{\text{CV}}}(S_{\text{CV}}) \leq \gamma),
        \end{align}
        where we use $\gamma$ to denote $F_{S_{\text{SITN}}}^{-1}(1-\alpha)$. We have no access to data from $\Pood$, so it is not possible to control this quantity without making some assumptions about the relationship between $\Pood$ and $\Pid$. Because we will now make assumptions about the form of $\Pood$, we take the liberty of making a few other assumptions that will ease the analysis without substantially changing the conclusions. In particular, we assume that $\phi$ perfectly models $\Pid$, that $\mathbb{E}[X] = 0$ for $X \sim \Pid$, and that the calibration set, $C$, is infinitely large. I.e., our empirical CDFs equal the true CDFs.
        
        We define $\Pood$ in terms of how its density is related to that of $\Pid$. In particular, denoting density functions using lower case $p$, for some positive real $a$ we consider
        \begin{equation}
            \pood(X) = a^{-D} e^{-\frac{1 - a^2}{a^2}\|X\|_2^2}\pid(X).
        \end{equation}
        For $a=1$ this sets $\Pood = \Pid$, but for $a\neq1$ it will result in more density being assigned to the tails or centre of the distribution. We choose this particular re-weighting because it results in a convenient distribution for $\phi^{-1}$ when $X \sim \Pood$, as stated in the lemma below. However, we also feel that it matches the intuition that many people have about OOD data being more likely to be found in the tails of the distribution, and it is a natural way to parameterise the relationship between $\Pood$ and $\Pid$ in terms of a single parameter $a$ that controls how much more likely OOD data is to be found in the tails compared to ID data.
        
        \begin{lemma}
            Under the above definition of $\Pood$, we have that
            \begin{equation}
                \textup{Law}(\phi^{-1}(X)) = \mathcal{N}(0, a^2I_{d}),\qquad \X \sim \Pood.
            \end{equation}
        \end{lemma}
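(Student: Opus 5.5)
The plan is a direct change-of-variables computation. It uses that $\phi$ is a diffeomorphism pushing $\Pnoise=\mathcal{N}(0,I)$ forward to $\Pid$, which is the assumption made above that $\phi$ perfectly models $\Pid$. Write $\pnoise$ for the standard Gaussian density and $J_\phi$ for the Jacobian of $\phi$.

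The first step is to record the consequence of $\phi_\#\Pnoise=\Pid$. By the change of variables formula,
\begin{equation*}
\pid(\phi(z))\,|\det J_\phi(z)| = \pnoise(z) \quad \text{for all } z\in\mathbb{R}^D.
\end{equation*}

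The second step is to compute the law of $\Z=\phi^{-1}(\X)$ for $\X\sim\Pood$. Since $\phi^{-1}$ is also a diffeomorphism, $\Z$ has density
\begin{equation*}
p_{\Z}(z)=\pood(\phi(z))\,|\det J_\phi(z)|.
\end{equation*}
Substituting the definition of $\pood$ and then the identity from the first step gives
\begin{equation*}
p_{\Z}(z)=a^{-D}e^{-\frac{1-a^2}{a^2}\|\phi(z)\|_2^2}\,\pnoise(z).
\end{equation*}
The key structural point is that the divergence/Jacobian term cancels completely. Only the reweighting factor and the prior density survive.

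The third step is to match this to the target. The $\mathcal{N}(0,a^2I)$ density divided by $\pnoise(z)$ equals
\begin{equation*}
a^{-D}\exp\!\Big(-\tfrac{1-a^2}{2a^2}\|z\|_2^2\Big).
\end{equation*}
Two things must therefore hold for the lemma. The reweighting must be a function of the noise-space norm, so that $\|\phi(z)\|_2$ can be replaced by $\|z\|_2$. And the constant in the exponent must agree with this, which under the usual convention $\pnoise(z)\propto e^{-\|z\|^2/2}$ means a factor $\tfrac{1}{2}$. I would make both explicit.

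Concretely, I would read the reweighting as $e^{-\frac{1-a^2}{2a^2}\|\phi^{-1}(X)\|_2^2}$, with $\|X\|_2$ used as shorthand for the norm of the latent of $X$. Alternatively, one can assume $\|\phi(z)\|=\|z\|$, as happens when $\Pid$ is itself centred Gaussian and $\phi$ is an orthogonal map. The stated assumption $\mathbb{E}[X]=0$ is in the spirit of the latter. With either reading, the exponents combine to
\begin{equation*}
-\tfrac{1}{2}\|z\|^2-\tfrac{1-a^2}{2a^2}\|z\|^2=-\tfrac{\|z\|^2}{2a^2},
\end{equation*}
and the prefactor $(2\pi)^{-D/2}a^{-D}$ is exactly the $\mathcal{N}(0,a^2I)$ normaliser.

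As a by-product, this also shows that $\pood$ integrates to one, so it is a legitimate density. That follows by integrating $p_{\Z}$, which is now a Gaussian density, and needs no separate argument.

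The only genuine obstacle is the third step. For a general nonlinear flow, $\|\phi(z)\|_2\neq\|z\|_2$, and then the claim fails as literally written. I would resolve this by stating the interpretation above explicitly, rather than by trying to prove a norm-preservation property of $\phi$, which does not hold in general.
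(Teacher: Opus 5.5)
Your argument, a change of variables in which the Jacobian factor cancels via $\phi_\#\Pnoise=\Pid$, is the same as the paper's proof. The paper's computation glosses over exactly the two issues you flag, so your explicit reinterpretation of the reweighting as $a^{-D}e^{-\frac{1-a^2}{2a^2}\|\phi^{-1}(\X)\|_2^2}$ is what actually makes the lemma true: the paper replaces $p_0(\phi^{-1}(\X))$ by $\mathcal{N}(x;0,I)$ while keeping $\|\X\|_2^2$ in the exponent, and its final equality $a^{-D}e^{-\frac{1-a^2}{a^2}\|x\|_2^2}\mathcal{N}(x;0,I)=\mathcal{N}(x;0,a^2I)$ holds only for $a=1$ because the factor $\tfrac{1}{2}$ is missing.
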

        \begin{proof}
            We can use the change of variable formula to derive the density of $\phi^{-1}(X)$ in the situations where $\X \sim \Pid$ and $\X \sim \Pood$. In the first case we have
            \begin{equation}
                \pnoise(\phi^{-1}(\X)) = \pid(\X)|\text{det}J_{\phi^{-1}}(\X)|,
            \end{equation}
            and in the second we have that
            \begin{equation}
                \palt(\phi^{-1}(\X)) = \pood(\X)|\text{det}J_{\phi^{-1}}(\X)|.
            \end{equation}
            Taking the ratio of these, we obtain
            \begin{align*}
            \palt(\phi^{-1}(\X)) &= \frac{\pood(\X)}{\pid(\X)}\pnoise(\phi^{-1}(\X))\\
                                 &= a^{-D} e^{-\frac{1-a^2}{a^2}\|\X\|_2^2} \mathcal{N}(x; 0, I)\\
                &= \mathcal{N}(x; 0, a^2I).
            \end{align*}
        \end{proof}
        
        We can decompose the Type II error as
        \begin{align}
            &\Pood(F_{S_{\text{CV}}}(S_{\text{CV}}) \leq \gamma | F_{S_{\text{AD}}}(S_{\text{AD}}) \leq \gamma) \Pood(F_{S_{\text{AD}}}(S_{\text{AD}}) \leq \gamma) \\
            &=\Pid(F_{S_{\text{CV}}}(S_{\text{CV}}) \leq \gamma | F_{S_{\text{AD}}}(S_{\text{AD}}) \leq \gamma) \Pood(F_{S_{\text{AD}}}(S_{\text{AD}}) \leq \gamma),
        \end{align}
        because the change in variance of the noise distribution for OOD compared to ID data does not result in a change in the CV statistic, due to the scale invariance of this statistic. Now we note that
        \begin{align}
            \Pid(F_{S_{\text{CV}}}(S_{\text{CV}}) \leq \gamma | F_{S_{\text{AD}}}(S_{\text{AD}}) \leq \gamma) &= \frac{1-\alpha}{\Pid(F_{S_{\text{AD}}}(S_{\text{AD}}) \leq \gamma)} \\
                    &=\frac{1-\alpha}{\gamma},
        \end{align}
        due to the marginal uniform distribution induced by the quantile transform. This yields a Type II error rate of
        \begin{align}
            \Pood(h_\gamma(X) = \text{ID}) &= \frac{1-\alpha}{\gamma} \Pood(F_{S_{\text{AD}}}(S_{\text{AD}}) \leq \gamma) \\
            &= \frac{1 - \alpha}{F_{S_{\text{SITN}}}^{-1}(1-\alpha)} \Pood(F_{S_{\text{AD}}}(S_{\text{AD}}) \leq \gamma).
        \end{align}
        From this, we can conclude that the Type II error rate of SITN for the defined $\Pood$ is dependent mainly on the Type II error of the Anderson-Darling test, premultiplied by a factor that can be interpreted as measuring the calibration of the $S_{\text{SITN}}$ test statistic. I.e., if $S_{\text{SITN}}$ is already perfectly calibrated it will follow a uniform distribution, and the premultiplication factor will be one. Prior work has shown that Anderson-Darling tests exhibit favourable Type II error rates in the setting where the alternative hypothesis are zero mean, non-unit variance normal distributions, such as the case of our conjectured $\Pood$.

\section{Training Details}
\label{appendix:training_details}

    We used the UNet architecture following \citeauthor*{ho_denoising_2020} \cite{ho_denoising_2020}, as implemented in \citeauthor*{dhariwal_diffusion_2021} \cite{dhariwal_diffusion_2021} with the hyperparameters detailed in \cref{tab:hyperparameters}.
    Models were optimized with AdamW (learning rate 1e-4, weight decay $0.01$, $\beta_1=0.9$, $\beta_2=0.999$, $\epsilon=1\text{e-}8$) and global gradient norm clipping (max norm $1.0$) for 500,000 gradient steps (batch size 128), with weights restored to the checkpoint of lowest validation loss.

    All images were normalised per-channel using statistics computed from the respective training set. No data augmentation was applied during training.

    \begin{table}[h]
      \caption{UNet hyperparameters}
      \label{tab:hyperparameters}
      \centering
      \begin{tabular}{ll}
        \toprule
        Hyperparameter & Value \\
        \midrule
        Channels & 128 \\
        Depth & 2 \\
        Channels multiple & 1, 2, 2, 2 \\
        Heads & 1 \\
        Attention resolution & 16 \\
        Dropout & 0.0 \\
        
        \bottomrule
      \end{tabular}
    \end{table}

    Models were trained and evaluated on an internal cluster with L40S GPUs. On this hardware, the training time for an individual model was around 20 hours and likelihood evaluations took about 6 minutes per 1{,}000 samples. The total compute time for the cross-dataset results (\cref{sec:cross_dataset_results}) is therefore around 100 hours (training 3 models, evaluating likelihoods on training and validation data for each dataset required for the fitting of Typicality, DoSE and SITN, and evaluating likelihoods on the test splits of all three datasets for each individual model). Additional evaluations for the perturbation experiments (\cref{sec:perturbation_results}) take roughly another 100 hours (50{,}000 samples for each of the 19 corruptions). The additional baseline experiments with the WAIC ensemble approach (\cref{appendix:additional_baselines}) require the training and evaluation of an additional four models (for a total of 5 models in the ensemble) for the aforementioned cross-dataset evaluations, resulting in another 400 hours of compute time.

\section{Statistical Analysis}
\label{appendix:statistical_analysis}

    Confidence intervals for AUROC values were computed through a stratified bootstrap resampling procedure with 10,000 iterations. In each iteration, data was sampled with replacement while preserving the original ratio of in-distribution to out-of-distribution samples. The confidence intervals were derived using the empirical 2.5th and 97.5th percentiles of the resulting metric distributions (i.e., the percentile bootstrap method).
    
\section{Software}
\label{appendix:software}

    The research code for our experiments is publicly available at \reviewhide{\url{https://github.com/bomatter/signal-in-the-noise-paper}}. Dependencies notably include PyTorch \cite{paszke_pytorch_2019}, the flow matching library by \citeauthor*{lipman_flow_2024} \cite{lipman_flow_2024} for implementations of the ODE solvers for the likelihood evaluations, and the UNet implementation by \citeauthor*{dhariwal_diffusion_2021}\cite{dhariwal_diffusion_2021}.

\section{Top 150 Samples with Highest OOD Scores}
\label{appendix:top_150_ood}

    In the following we visualise the top 150 samples with the highest OOD scores for the CIFAR-10$\rightarrow$SVHN setup for each of the baseline methods and SITN. For all baseline methods, the highest OOD scores are predominantly assigned to in-distribution samples (failure cases), whereas the highest SITN OOD scores mostly correspond to actual OOD samples.

    \begin{figure}[h]
        \centering
        \includegraphics[width=\textwidth]{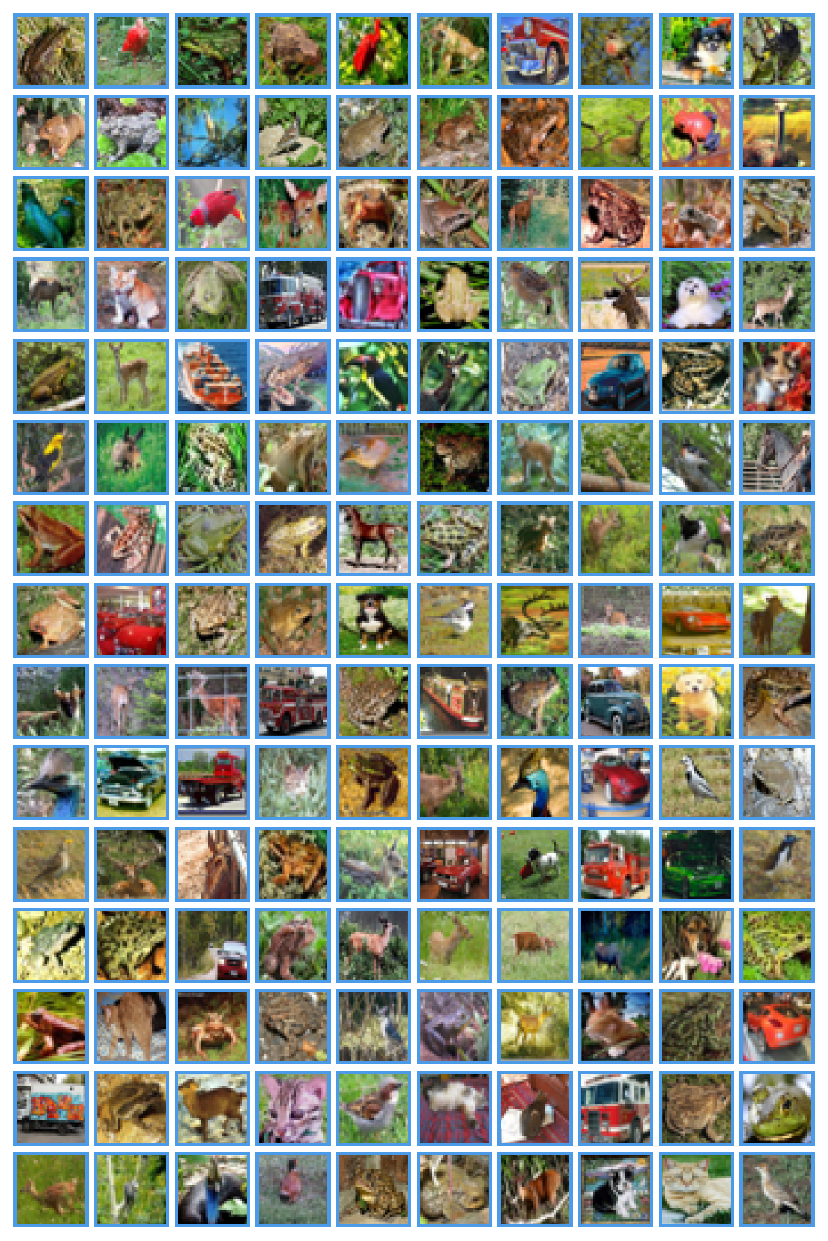}
        \caption{
            Top 150 samples with highest log-likelihood OOD score (lowest log-likelihoods) for the CIFAR-10$\rightarrow$SVHN setup. ID images from CIFAR-10 and OOD images from SVHN are marked with \textcolor[RGB]{76, 155, 232}{blue} and \textcolor[RGB]{244, 136, 58}{orange} borders, respectively.
            All 150 samples flagged as most OOD by the likelihood metric are in-distribution (blue), illustrating a pronounced failure of the method in this setup.
        }
        \label{fig:top_150_cifar10_to_svhn_log_likelihood}
    \end{figure}

    \begin{figure}[h]
        \centering
        \includegraphics[width=\textwidth]{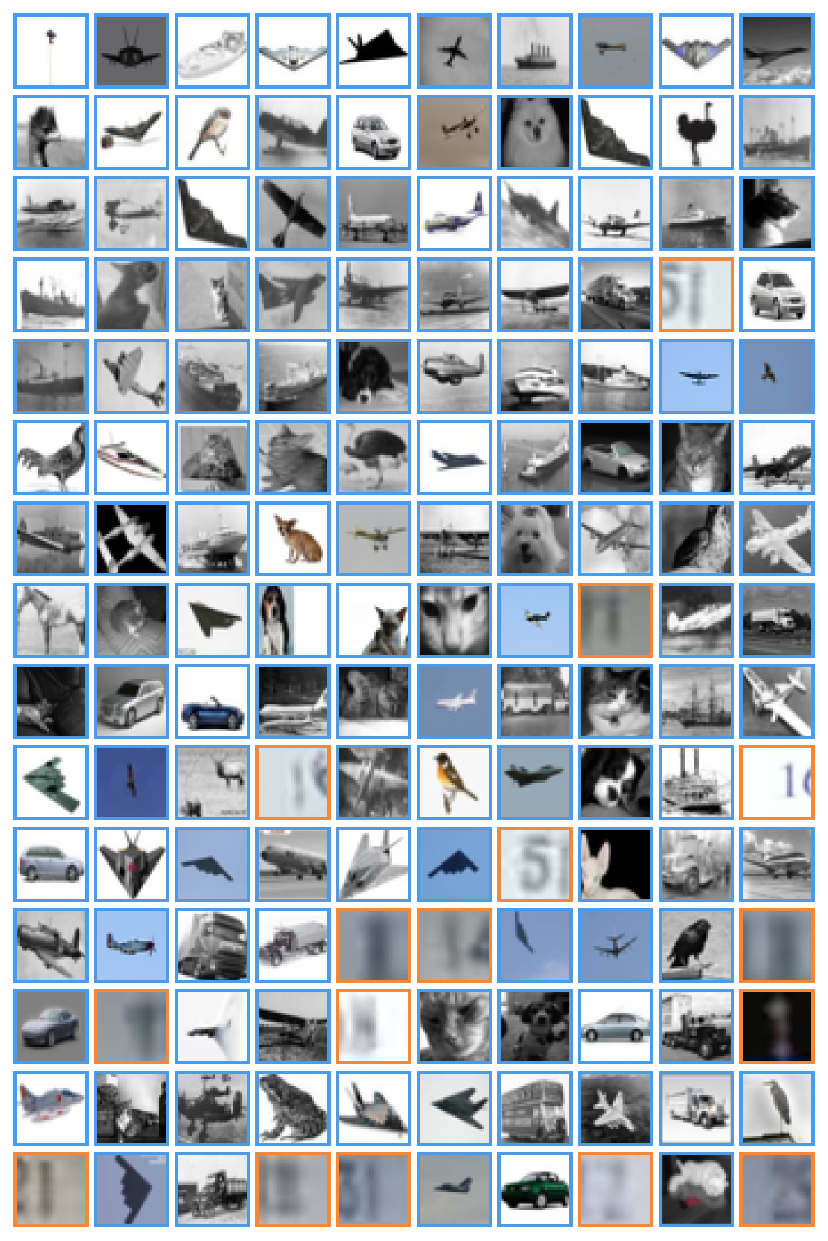}
        \caption{
            Top 150 samples with highest Typicality OOD score for the CIFAR-10$\rightarrow$SVHN setup. ID images from CIFAR-10 and OOD images from SVHN are marked with \textcolor[RGB]{76, 155, 232}{blue} and \textcolor[RGB]{244, 136, 58}{orange} borders, respectively.
            Samples flagged as most OOD by the Typicality metric are dominated by simple in-distribution images, illustrating the method's complexity bias and pronounced failure mode.
        }
        \label{fig:top_150_cifar10_to_svhn_typicality}
    \end{figure}

    \begin{figure}[h]
        \centering
        \includegraphics[width=\textwidth]{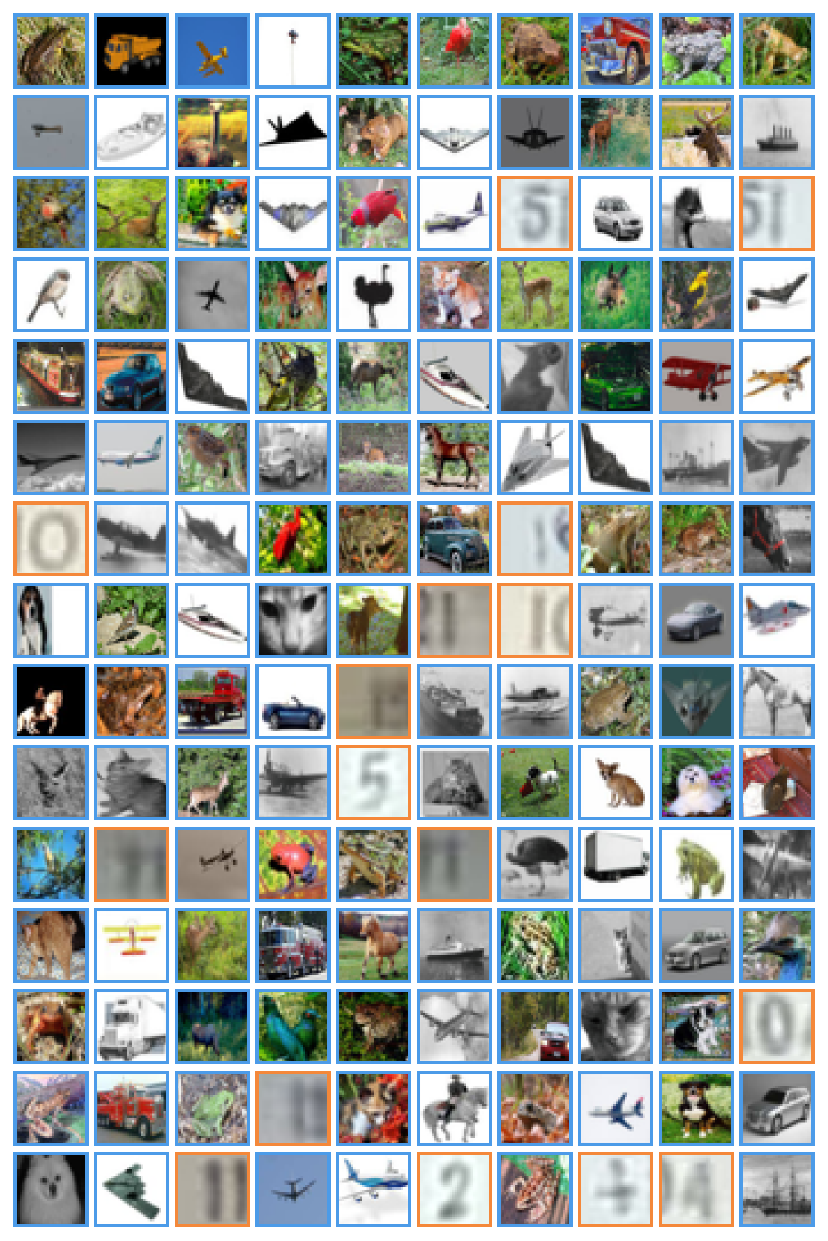}
        \caption{
            Top 150 samples with highest DoSE OOD score for the CIFAR-10$\rightarrow$SVHN setup. ID images from CIFAR-10 and OOD images from SVHN are marked with \textcolor[RGB]{76, 155, 232}{blue} and \textcolor[RGB]{244, 136, 58}{orange} borders, respectively.
            Samples flagged as most OOD by DoSE are dominated by ID images. Shared failure cases with the likelihood and Typicality illustrate a persistent complexity bias and failure mode.
        }
        \label{fig:top_150_cifar10_to_svhn_dose}
    \end{figure}
    
    \begin{figure}[h]
        \centering
        \includegraphics[width=\textwidth]{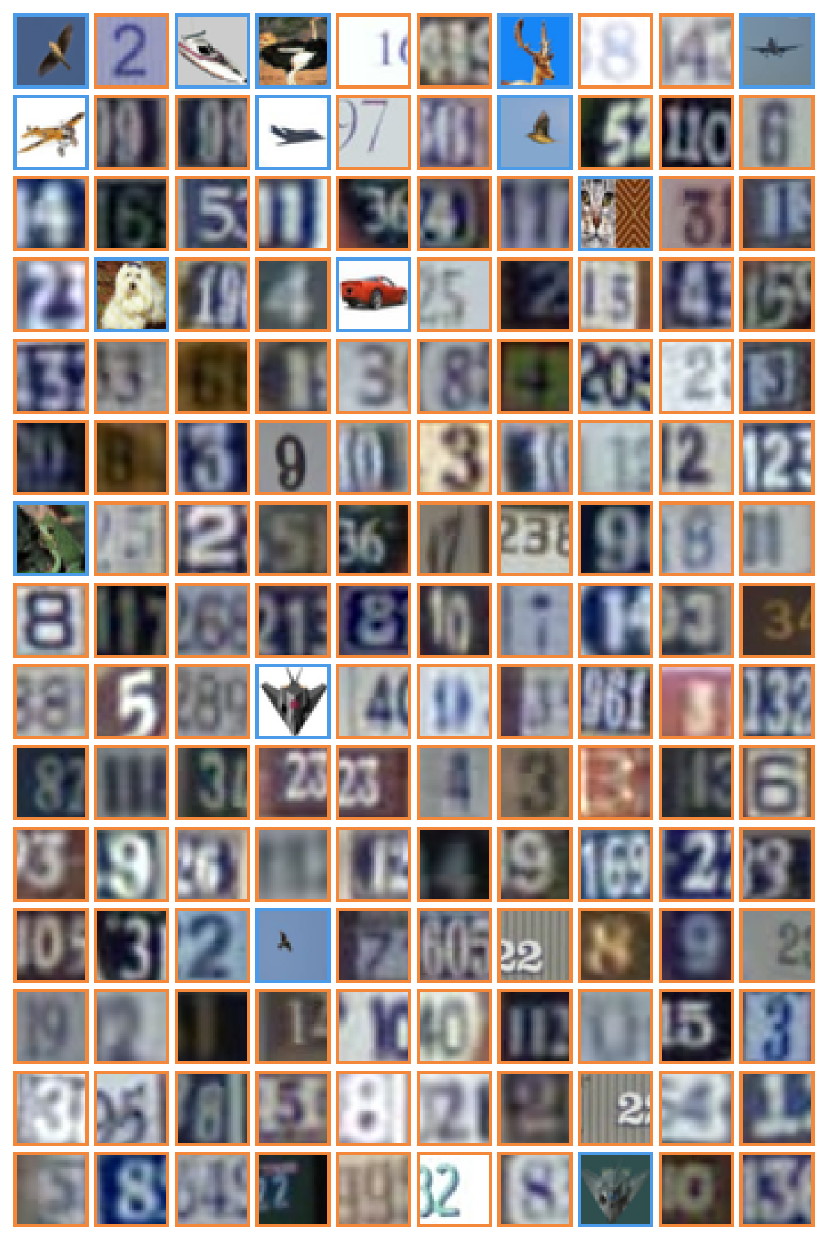}
        \caption{
            Top 150 samples with highest SITN OOD score for the CIFAR-10$\rightarrow$SVHN setup. ID images from CIFAR-10 and OOD images from SVHN are marked with \textcolor[RGB]{76, 155, 232}{blue} and \textcolor[RGB]{244, 136, 58}{orange} borders, respectively. With a few exceptions, the highest SITN OOD scores successfully identify true out-of-distribution samples.
        }
        \label{fig:top_150_cifar10_to_svhn_sitn}
    \end{figure}

\end{document}